\newcommand{\Bell}{\mathfrak{L}}
\newcommand{\BellT}{\mathfrak{T}}
\newcommand{\eqdef}{=} 
\newcommand{\relname}[2]{\stackrel{\text{(#1)}}{#2}}
\newcommand{\zero}{\bm{0}}
\newcolumntype{Y}{>{\centering\arraybackslash}X}
\definecolor{Gray}{gray}{0.98}
\definecolor{LightCyan}{rgb}{0.88,1,1}
\newcolumntype{g}{>{\columncolor{Gray}}c}
\DeclareMathOperator*{\argmin}{argmin}
\DeclarePairedDelimiter\abs{\lvert}{\rvert}%
\DeclarePairedDelimiter\norm{\lVert}{\rVert}%
\let\oldabs\abs
\def\abs{\@ifstar{\oldabs}{\oldabs*}}
\let\oldnorm\norm
\def\norm{\@ifstar{\oldnorm}{\oldnorm*}}
\renewcommand{\*}[1]{{\pmb{#1}}}
\newcommand{\eat}[1]{}
\newcommand{\states}{\mathcal{S}}
\newcommand{\actions}{\mathcal{A}}
\newcommand*{\tr}{^{\mkern-1.5mu\mathsf{T}}}
\newcommand{\E}{\mathbb{E}}
\renewcommand{\P}{\mathbb{P}}
\newenvironment{mprog}{\begin{array}{>{\displaystyle}l>{\displaystyle}l>{\displaystyle}l}}{\end{array}}
\newcommand{\stc}{\\[1ex]  \operatorname{subject\,to} &}
\newcommand{\cs}{\\[1ex] & }
\newcommand{\minimize}[1]{\operatorname*{minimize}_{#1} &}
\newcommand{\one}{\bm{1}}
\newcommand{\Real}{\mathbb{R}}
\renewcommand{\ss}{\;\vert\;}
\newcommand{\zeros}{\bm{0}}
\newcommand{\linf}{L_\infty}
\newcommand{\lw}{_{1,\*w}}
\newcommand{\liw}{_{\infty,\*w}}
\newcommand{\real}{\mathbb{R}}
\newcommand{\prob}{\mathbb{P}}
\newcommand{\simplexs}{\Delta ^{ S }}
\newcommand{\ambset}{\mathcal{P}}
\newcommand{\expect}{\mathbb{E}}
\newcommand{\sa}{_{s,a}}
\newcommand{\qeu}{\mathcal{Q}}
\newcommand{\opt}{^\star}
\DeclarePairedDelimiter\ceil{\lceil}{\rceil}
\DeclareMathOperator{\Med}{Med}
\newcommand{\marek}[1]{\textcolor{blue}{[#1]}}
\newcommand{\bahram}[1]{\textcolor{red}{[#1]}}
\theoremstyle{plain}
\newtheorem{theorem}{Theorem}[section]
\newtheorem{lemma}[theorem]{Lemma}
\newtheorem{proposition}[theorem]{Proposition}
\theoremstyle{defintion}
\newtheorem{assumption}{Assumption}
\newtheorem{example}[theorem]{Example}
\theoremstyle{remark}
\newtheorem{remark}[theorem]{Remark}
\newcommand{\myref}[1]{\cref{#1}\mynameref{#1}{\csname r@#1\endcsname}}
\newcommand{\Myref}[1]{\Cref{#1}\mynameref{#1}{\csname r@#1\endcsname}}
\def\mynameref#1#2{%
	\begingroup
	\edef\@mytxt{#2}%
	\edef\@mytst{\expandafter\@thirdoffive\@mytxt}%
	\ifx\@mytst\empty\else
	\space(\nameref{#1})\fi
	\endgroup
}
\newcommand{\cond}{\,\vert\,}   
\renewcommand{\cite}[1]{\citep{#1}}
\begin{document}

\twocolumn[
	\aistatstitle{Optimizing Percentile Criterion Using Robust MDPs}

	\aistatsauthor{ Bahram Behzadian$^{1\star}$   \And Reazul Hasan Russel$^{1\star}$  \And Marek Petrik$^{1}$ \And Chin Pang Ho$^{2}$}

	\aistatsaddress{ $^{1}$University of New Hampshire \And $^{2}$City University of Hong Kong } ]

\begin{abstract}
  We address the problem of computing reliable policies in reinforcement learning problems with limited data. In particular, we compute policies that achieve good returns with high confidence when deployed. This objective, known as the \emph{percentile criterion}, can be optimized using Robust MDPs~(RMDPs). RMDPs generalize MDPs to allow for uncertain transition probabilities chosen adversarially from given ambiguity sets. We show that the RMDP solution's sub-optimality depends on the spans of the ambiguity sets along the value function. We then propose new algorithms that minimize the span of ambiguity sets defined by weighted $L_1$ and $L_\infty$ norms. Our primary focus is on Bayesian guarantees, but we also describe how our methods apply to frequentist guarantees and derive new concentration inequalities for weighted $L_1$ and $L_\infty$ norms. Experimental results indicate that our optimized ambiguity sets improve significantly on prior construction methods.
\end{abstract}

\section{Introduction} \label{sec:introduction}
Applying reinforcement learning to problem domains that involve high-stakes decisions, such as medicine or robotics, demands that we have high confidence in the quality of a policy before deploying it. Markov Decision Processes~(MDPs) represent a well-established model in reinforcement learning~\citep{puterman2005,sutton2018reinforcement}, but their sequential nature makes them particularly sensitive to parameter errors, which can quickly accumulate~\citep{Mannor2007,Xu2009,Tirizoni2018}. Parameter errors are unavoidable when estimating MDPs from data~\cite{Laroche2019}. We focus on computing policies that maximize high-confidence return guarantees in the batch settings. Such guarantees reduce the chance of disappointing the stakeholders after deploying the policy and give them a choice to gather more data or switch to an alternative strategy~\cite{petrik2016safe}.

We propose a new method for computing reliable policies that achieve, with high confidence, good returns once deployed. This objective is also known as the \emph{percentile criterion}~\cite{Delage2009} and can be modeled as risk-aversion to epistemic uncertainty~\cite{petrik2019beyond}. Because optimizing the percentile criterion is NP-hard~\cite{Delage2009}, we use Robust MDPs~(RMDPs)~\cite{Iyengar2005} to optimize it approximately. We establish new error bounds on the performance loss of the RMDPs' policy compared to the optimal percentile solution. Using these new bounds when constructing the RMDPs leads to policies with significantly better return guarantees than reported in prior work~\cite{Delage2009,petrik2019beyond}.

RMDPs generalize MDPs to allow for uncertain, or unknown, transition probabilities~\citep{Nilim2005,Iyengar2005,Wiesemann2013}. Transition probabilities are hard to estimate from data, and even small errors significantly impact the returns and policies. RMDPs consider transition probabilities to be chosen adversarially from a so-called \emph{ambiguity set} (or an uncertainty set). The optimal policy is computed by solving a specific zero-sum game in which the agent chooses the best policy, and an adversarial nature chooses the worst transition probabilities from the ambiguity sets. RMDPs are tractable when their ambiguity sets satisfy so-called rectangularity assumptions~\cite{Wiesemann2013,Mannor2016,Goyal2018}.

Given the goal is to optimize the percentile criterion, the critical question is how to construct the ambiguity sets from state transition samples to optimize the percentile criterion. Prior work constructs ambiguity sets as confidence regions bounded by a distance from a nominal (expected) transition probability~\citep{petrik2016safe,petrik2019beyond,Auer2009,Strehl2004,Gupta2019,Iyengar2005}. In most cases, the ambiguity sets are represented as $L_1$-norm (also referred to as total variation) balls around the nominal probability. In comparison with other probability distance measures, like KL-divergence, the polyhedral nature of the $L_1$-norm allows more efficient computation~\cite{Ho2018}.

The main contribution of this paper is a new technique for optimizing the \emph{shape} of ambiguity sets in RMDPs. Prior work simply constructs ambiguity sets with the smallest size, or volume, that is sufficient to provide the desired high-confidence guarantees. Our new bounds show that the \emph{span} of the ambiguity set along a specific direction is much more important than its volume. To minimize their span, we consider asymmetric ambiguity sets defined in terms of weighted $L_1$ and $L_\infty$ balls. Recent results shows that RMDPs with such ambiguity sets can be solved very efficiently~\citep{Ho2018,Ho2020}. Although our primary focus is on the Bayesian setup, we also discuss the frequentist setup and derive new high-confidence concentration inequalities for the weighted $L_1$ and $L_\infty$ norms.

The remainder of the paper is organized as follows. We first describe the necessary background in \cref{sec:framework} and bound the performance loss of RMDPs as a function of the ambiguity sets' span in \cref{sec:overall_framework}. \cref{sec:shape} describes algorithms that minimize the span of ambiguity sets by optimizing the weights of the norms used in their definition. Then, \cref{sec:size} describes methods for choosing the size of the weighted-norm ambiguity sets. In \cref{sec:frequentist}, we outline the approach in the frequentist setup and present new concentration inequalities for weighted $L_1$ and $\,L_\infty$ ambiguity sets. Finally, the experimental results in \cref{sec:experiments} show that minimizing ambiguity sets' span greatly improves the RMDPs' solution quality.

\emph{Notation}: Bold letters, like $\*x_s$, indicate an $s$-th vector, while $y_s$ would indicate the $s$-th element of a vector $\*y$. The symbol $\Delta^N$ denotes the $N$-dimensional probability simplex (non-negative vectors that sum to $1$). We also use $\mathcal{A}^\mathcal{B}$ to denote the set of all functions $\mathcal{A} \to \mathcal{B}$.

\section{Framework and Related Work} \label{sec:framework}

We consider the standard infinite-horizon MDP setting with finite states $\states = \{1, \ldots, S \}$ and actions $\actions = \{1, \ldots, A\}$. The agent can take any action $a \in \actions$ in every state $s \in \states$ and transitions to the next state $s'$ according to the \emph{true} transition function $P\opt : \states\times\actions\to\simplexs$, where $\simplexs$ is a probability simplex. For any transition function $P : \states\times\actions\to\simplexs$, we use the shorthand $\*p_{s,a} = P(s,a)$ to denote the vector of transition probabilities from a state $s\in\states$ and an action $a\in\actions$. The agent also receives a reward $r_{s,a,s'}\in\Real$; we use $\*r_{s,a} = (r_{s,a,s'})_{s'\in\states} \in \Real^S$ to denote the vector of rewards. The goal is to compute a deterministic policy $\pi: \states \rightarrow \actions$ that maximizes the $\gamma$-discounted return~\citep{puterman2005}:
\[\max_{\pi \in \Pi} \; \rho(\pi,P) = \max_{\pi \in \Pi} \;  \expect \left[ \sum_{t=0}^{\infty} \gamma ^t \cdot r_{S_t, \pi(S_t), S_{t+1}} \right]~,\]
where $S_0 \sim \*p_0$, $S_{t+1} \sim P\opt(S_t, \pi(S_t))$, $\*p_0\in\Delta^S$ is the initial probability distribution, and $\Pi$ is the set of all deterministic policies. The return function $\rho$ is parameterized by $P$, because we assume them to be uncertain or unknown.

We consider the batch RL setting in which the transition function must be estimated from a fixed dataset $D = \left(s_t, a_t, s_{t}'\right)_{t = 1, \ldots, T}$ generated by a behavior policy. We describe the Bayesian setup first and outline the frequentist extension in \cref{sec:frequentist}. Bayesian techniques start with a prior distribution over the transition function $P\opt$ and then derive a posterior distribution $f$ over $P\opt$~\citep{Delage2009,Xu2009,Gelman2014}. We use the concise notation $\tilde{P} = P\opt \cond D$ to represent the posterior over the transition function conditioned on the data $D$. In other words, $\E[\tilde{P}] = \E[P\opt \cond D]$.

\paragraph{Percentile citerion} The Bayesian \emph{percentile criterion} optimization simultaneously optimizes for the policy $\pi$ and a \emph{high-confidence lower bound} on its performance $y$:
\begin{equation}\label{eq:percentile_Bayesian}
	\max_{\pi\in\Pi} \max_{y\in\Real} \; \left\{ y \ss \P_{\tilde{P} \sim f}\left[\rho(\pi,\tilde{P}) \ge y \right] \ge 1-\delta \right\} ~.
\end{equation}
The confidence parameter $\delta \in [0, \nicefrac{1}{2})$ bounds the probability that the optimized policy $\pi$ fails to achieve a return of at least $y$ when deployed. For example, $\delta = 0$ maximizes the worst-case return, and $\delta = 0.5$ maximizes the median return. It is common in practice to choose a small positive value, such as $\delta = 0.05$, in order to achieve meaningful guarantees without being overly conservative. Also, the constraint $\delta < \nicefrac{1}{2}$ is important as our results (\cref{thm:sub_optimality}) do not hold for the risk-seeking setting with $\delta \ge \nicefrac{1}{2}$.

There are several important practical advantages to optimizing the percentile criterion instead of the average return~\cite{Delage2009}. First, the output policy is more robust and less likely to fail catastrophically due to model errors. Second, the objective value $y$ in \eqref{eq:percentile_Bayesian} provides a high-confidence lower bound on the true return. Having such a guarantee on its return helps to avoid an unpleasant surprise when the policy $\pi$ is deployed. If the guarantee $y$ is insufficiently low, the stakeholder may decide to collect more data or choose a different methodology for guiding their decisions.

We emphasize that we develop algorithms that are independent of how the posterior distribution $f$ is computed. Bayesian priors can be as simple as  independent Dirichlet distributions over $\*p\opt_{s,a}$ for each state $s$ and action $a$. However, hierarchical Bayesian models are more practical since they can generalize among states even when $|D| \ll S$~\citep{Delage2009,petrik2019beyond}. Many tools, such as Stan~\cite{Stan2017} or JAGS, now exist that allow for convenient and efficient computation of the posterior distribution $f$ using MCMC.


\paragraph{Robust MDPs} Because the optimization in \eqref{eq:percentile_Bayesian} is NP-hard~\cite{Delage2009}, we seek new algorithms that can approximate it efficiently.
Robust MDPs~(RMDPs), which extend regular MDPs, are a convenient and powerful framework that can be used to optimize the percentile criterion. In particular, RMDPs allow for a generic ambiguity set $\hat\ambset\subseteq \left\{ P : \states\times\actions\to\simplexs \right\}$ of possible transition functions instead of a single known value $P$. The solution to an RMDP is the best policy for the worst-case plausible transition function:
\begin{equation}\label{eq:rmdp}
	\max_{\pi \in \Pi} \min_{P \in \hat\ambset} \, \rho (\pi , P) ~.
\end{equation}
The optimization problem in~\eqref{eq:rmdp} is NP-hard~\citep{Nilim2005,Wiesemann2013} but is tractable for rectangular ambiguity sets which are defined independently for each state and action~\citep{Iyengar2005,le2007robust}. We, therefore, restrict our attention to SA-rectangular ambiguity sets defined as $p$-norm balls around nominal probability distributions for some $w: \states\times\actions\to \Real_{++}^S$ and $\psi : \states\times\actions \to \Real_+$:
\begin{equation*} \label{eq:amb_set_weighted}
	\ambset(w, \psi) \eqdef \left\{ P \in \mathcal{F} \;\vert\; P(s,a) \in \ambset_{s,a}(w(s,a), \psi(s,a)) \right\},
\end{equation*}
where $\mathcal{F} = (\Delta^S)^{\states\times\actions}$. In the remainder of the paper, we resort to the shorter notation $\*w_{s,a} = w(s,a)$ and $\psi_{s,a} = \psi(s,a)$ when the meaning is obvious from the context. Note that $\hat{\ambset}$ refers to a generic ambiguity set, while $\ambset(w,\psi)$ refers to the specific norm-based one. The ambiguity set $\ambset_{s,a}(\*w, \psi)$ for $s\in\states$, $a\in\actions$, positive weights $\*w \in \real^S_{++}$, and budget $\psi\in\Real_+$ is defined as:
\begin{equation}\label{eq:ambiguity_set_poly}
	\ambset_{s,a}(\*w, \psi) \eqdef \left\{ \*p \in \simplexs ~:~ \norm{\*p - \bar{\*p}\sa }_{\*w} \le \psi \right\},
\end{equation}
where $\bar{\*p}\sa \eqdef \E_{\tilde{P}}\bigl[\tilde{P}(s,a) \bigr]$ is the mean posterior transition probability. The weighted polynomial norms are defined as $\norm{\*y}_{1,\*w}  \eqdef \sum_{i=1}^S w_i \cdot \lvert y_i \rvert$ and  $\norm{\*y}_{\infty,\*w} \eqdef \max\, \{ w_i \cdot \lvert y_i \rvert \ss i\in\states \}$. We use the generic notation $\norm{\cdot}_{\*w}$ in statements that hold for both $\norm{\cdot}_{1,\*w}$ and $\norm{\cdot}_{\infty,\*w}$. The weights $\*w$ in \eqref{eq:ambiguity_set_poly} determine the shape of the ambiguity set, and the budget $\psi$ determines its size.

Note that the parameter $\psi$ in the definition of $\ambset_{s,a}(\*w,\psi)$ is redundant. It can be set to $1$ without loss of generality: $\ambset_{s,a}(\*w, \psi)  = \ambset_{s,a}(\nicefrac{1}{\psi}\cdot \*w, 1)$
when $\psi > 0$. In other words, it is possible to change the size of the ambiguity set solely by scaling the weights $\*w$. To eliminate this redundancy, we assume without loss of generality that the weights of the set are normalized such that $\norm{\*w}_2 = 1$.

In rectangular RMDPs, a unique optimal value function $\hat{\*v}\in\Real^S$ exists and is a fixed point of the robust Bellman operator $\Bell : \Real^S \to \Real^S$ defined for each $s\in\states$ and $\*v \in\Real^S$ as~\cite{Iyengar2005}
\begin{equation}  \label{eq:robust_update}
	(\Bell\, \*v)_s \eqdef \max_{a \in \actions} \min_{\*p \in \hat\ambset_{s,a}} \, \Bigl( \*r_{s,a} + \gamma \cdot \*p\tr \*v \Bigr)~.
\end{equation}
The optimal robust value function can be computed using value iteration, policy iteration, and other methods~\cite{Iyengar2005,Kaufman2013,Ho2020}. The optimal robust policy $\hat\pi:\states\to\actions$ is greedy with respect to the optimal robust value function $\hat{\*v}$, and the robust return can be computed from the value function as~\cite{Ho2020}:
\[
\hat{\rho} \;\eqdef\; \max_{\pi \in \Pi} \min_{P \in \hat\ambset} \; \rho(\pi, P) \;=\; \*p_0\tr \hat{\*v}~.
\]
We will find it convenient to use $\hat{\*z}_{s,a} \in \Real^S,\,s\in\states, a\in\actions$ to denote the vector of values associated with the transitions from the state $s$ and action $a$:
\begin{equation} \label{eq:z_def}
	\hat{\*z}_{s,a} \;\eqdef\; \*r_{s,a} + \gamma \cdot \hat{\*v} ~.
\end{equation}
In the remainder of the paper, we use $\hat{\ambset}$ to denote a generic RMDP ambiguity set and use $\ambset(w,\psi)$ to denote an ambiguity set defined in terms of a weighted norm ball.

\section{RMDPs for Percentile Optimization} \label{sec:overall_framework}

This section describes the general algorithm for constructing RMDP ambiguity sets for optimizing the percentile criterion. We derive new bounds on the safety and optimality of the RMDP solution and propose a new algorithm that optimizes them. The bounds and algorithms in this section are general and are not restricted to norm-based ambiguity sets.

An important assumption, which is used throughout this paper, is that the ambiguity set in the RMDP is constructed to guarantee that it contains the unknown transition probabilities $\tilde{P}$ with a high probability as formalized next.
\begin{assumption} \label{asm:inset}
The  RMDP ambiguity set $\hat{\mathcal{P}}\subseteq \left\{ P : \states\times\actions\to\simplexs \right\}$ satisfies that:
\begin{equation*}
	\P_{\tilde{P}} \bigl[\tilde{P} \in \hat\ambset \bigr] \;\ge\; 1-\delta \,.
\end{equation*}
\end{assumption}
\cref{asm:inset} is common when constructing RMDPs for optimizing the percentile criterion~\cite{petrik2019beyond,Delage2009}. The following theorem shows that \cref{asm:inset} is a sufficient condition for $\hat{\rho}$ to be a lower bound on the true return of the robust policy $\hat{\pi}$. We state the result in terms of a generic ambiguity set $\hat\ambset$.
\begin{theorem} \label{thm:ambiguity_sufficient}
If \cref{asm:inset} holds, then the following inequality is satisfied with probability $1-\delta$:
\[
\hat\rho \;\le\;  \rho(\hat{\pi},\tilde{P}) ~.
\]
\end{theorem}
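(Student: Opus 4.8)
The plan is to exploit that the robust return $\hat\rho$ is exactly the worst-case return of the \emph{single} policy $\hat\pi$ over the ambiguity set, and then to observe that whenever the posterior draw $\tilde P$ happens to land inside that set, its realized return cannot fall below this worst case. No properties of the norm-based sets from \eqref{eq:ambiguity_set_poly} are needed, so the argument will go through for a generic $\hat\ambset$.

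First I would establish the identity $\hat\rho = \min_{P\in\hat\ambset}\rho(\hat\pi,P)$. By definition $\hat\rho = \max_{\pi\in\Pi}\min_{P\in\hat\ambset}\rho(\pi,P)$, and since $\hat\pi$ is greedy with respect to the optimal robust value function $\hat{\*v}$, it attains this outer maximum. This is the standard optimality property of greedy policies for SA-rectangular RMDPs, and it is consistent with the fixed-point characterization $\hat\rho = \*p_0\tr\hat{\*v}$ together with the robust Bellman operator in \eqref{eq:robust_update}.

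Next I would condition on the event $E = \{\tilde P\in\hat\ambset\}$. By \cref{asm:inset}, $\P_{\tilde P}[E]\ge 1-\delta$. On $E$ the draw $\tilde P$ is a feasible member of $\hat\ambset$, so the minimum over the set lower-bounds the objective evaluated at that particular point: $\min_{P\in\hat\ambset}\rho(\hat\pi,P)\le\rho(\hat\pi,\tilde P)$. Chaining this with the identity from the previous step yields $\hat\rho\le\rho(\hat\pi,\tilde P)$ on $E$, i.e. with probability at least $1-\delta$, which is the claim.

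The only step requiring genuine care is the first: the claim that the greedy robust policy $\hat\pi$ \emph{achieves} the max-min value $\hat\rho$, rather than merely being bounded by it. The inequality $\hat\rho\ge\min_{P}\rho(\hat\pi,P)$ is immediate from the outer maximization over $\Pi$, so the actual content lies in the reverse direction, which is where rectangularity and the resulting saddle-point structure of \eqref{eq:rmdp} enter. Once this identity is in hand, the containment argument and the appeal to \cref{asm:inset} are routine.
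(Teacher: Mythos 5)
Your proposal is correct and matches the paper's own proof essentially step for step: the paper's chain (a)--(d) is exactly your identity $\hat\rho = \min_{P\in\hat\ambset}\rho(\hat\pi,P)$ (via the definition of $\hat\rho$ and optimality of $\hat\pi$), followed by the containment argument on the event $\{\tilde P\in\hat\ambset\}$ and an appeal to \cref{asm:inset}. Your closing caveat is also well placed, with the minor remark that for the \emph{generic} $\hat\ambset$ of the theorem statement one may simply take $\hat\pi$ to be the argmax in \eqref{eq:rmdp} (attained since $\Pi$ is finite), so rectangularity is needed only to certify that the greedy policy from \eqref{eq:robust_update} is that argmax.
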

Please see \cref{app:sec_framework} for the proof.

\cref{thm:ambiguity_sufficient} generalizes Theorem~4.2 in \cite{petrik2019beyond} by relaxing its assumptions. In particular, \cref{asm:inset} allows for non-rectangular ambiguity sets $\hat{\mathcal{P}}$ and does not require the use of a union bound in its construction.


Next, we bound the performance loss of the RMDP policy $\hat\pi$ with respect to the optimal percentile criterion guarantee in \eqref{eq:percentile_Bayesian}. As we show, the quality of the RMDP policy depends not simply on the absolute size of the ambiguity set $\psi$, but on its span along a specific direction. The \emph{span} $\beta_{\*z}^{s,a}(\*w,\psi)$ of an ambiguity set $\ambset_{s,a}(\*w,\psi)$ along a vector $\*z \in \Real^S$ for $s\in\states$ and $a\in\actions$ is defined as:
\[
\beta_{\*z}^{s,a}(\*w,\psi) \eqdef \max_{\*p_1, \*p_2} \; \Bigl\{ (\*p_1 - \*p_2)\tr \*z \;\vert\; \*p_1,\*p_2 \in\ambset_{s,a}(\*w,\psi) \Bigr\}.
\]
The following theorem bounds the performance loss of the RMDP solution when using norm-bounded ambiguity sets. Note that \cref{thm:ambiguity_sufficient} implies that, under \cref{asm:inset}, the RMDP return $\hat\rho$ bounds the true return with high confidence and therefore must be a lower bound on the optimal $y\opt$ in \eqref{eq:percentile_Bayesian}.
\begin{theorem} \label{thm:sub_optimality}
When \cref{asm:inset} holds for $\hat{\ambset} = \ambset(w,\psi),\,w:\states\times\actions\to\Real_{++}^S, \psi: \states\times\actions\to\Real_+$, then the performance loss with respect to $y\opt$ optimal in \eqref{eq:percentile_Bayesian} is:
\[
0\;\le\; y\opt - \hat{\rho}  \;\le\; \frac{1}{1-\gamma} \cdot\max_{s\in\states} \max_{a\in\actions} \; \beta_{\hat{\*z}_{s,a}}^{s,a}(\*w,\psi)~,
\]
where $\hat\rho$ is a function of $w$ and $\psi$.
\end{theorem}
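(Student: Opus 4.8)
The plan is to sandwich $y\opt$ between the robust return $\hat\rho$ and its \emph{optimistic} counterpart, and then to control the gap between these two returns using the span. The left inequality is immediate from \cref{thm:ambiguity_sufficient}: since that result gives $\hat\rho\le\rho(\hat\pi,\tilde P)$ with probability $1-\delta$, the constraint $\P[\rho(\hat\pi,\tilde P)\ge\hat\rho]\ge 1-\delta$ holds, so the pair $(\hat\pi,\hat\rho)$ is feasible for \eqref{eq:percentile_Bayesian} and hence $\hat\rho\le y\opt$.

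For the upper bound I would introduce the optimistic return $\check\rho \eqdef \max_{\pi\in\Pi}\max_{P\in\hat\ambset}\rho(\pi,P)$, which, by the same fixed-point reasoning used for $\hat\rho$, equals $\check\rho=\*p_0\tr\check{\*v}$ where $\check{\*v}$ is the fixed point of the optimistic Bellman operator $(\BellT\*v)_s\eqdef\max_a\max_{\*p\in\hat\ambset_{s,a}}\*p\tr(\*r_{s,a}+\gamma\*v)$. The crux is to show $y\opt\le\check\rho$, and this is exactly where $\delta<\nicefrac{1}{2}$ is used. Let $\pi\opt$ attain $y\opt$, so that $y\opt=\var_\delta[\rho(\pi\opt,\tilde P)]$ equals the lower $\delta$-quantile. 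On the event $\{\tilde P\in\hat\ambset\}$, which has probability at least $1-\delta$ by \cref{asm:inset}, we have $\rho(\pi\opt,\tilde P)\le\max_{P\in\hat\ambset}\rho(\pi\opt,P)\le\check\rho$, so $\P[\rho(\pi\opt,\tilde P)\le\check\rho]\ge 1-\delta$. Consequently, for every $y>\check\rho$ we get $\P[\rho(\pi\opt,\tilde P)\ge y]\le\delta<1-\delta$, so no such $y$ satisfies the quantile constraint of \eqref{eq:percentile_Bayesian}; hence $y\opt\le\check\rho$. (The step $\delta<1-\delta$ fails precisely when $\delta\ge\nicefrac{1}{2}$, which is why that regime is excluded.) Combined with $\hat\rho\le y\opt$, this yields $0\le y\opt-\hat\rho\le\check\rho-\hat\rho$.

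It then remains to bound $\check\rho-\hat\rho$ by the span. I would work with the fixed points $\check{\*v}=\BellT\check{\*v}$ and $\hat{\*v}=\Bell\hat{\*v}$, noting $\check{\*v}\ge\hat{\*v}$ since $\BellT\ge\Bell$ pointwise. Evaluating the operator gap at $\hat{\*v}$ and using $\max_a f-\max_a g\le\max_a(f-g)$ together with the definition of the span gives, for each $s$, $(\BellT\hat{\*v})_s-(\Bell\hat{\*v})_s\le\max_a\bigl(\max_{\*p}\*p\tr\hat{\*z}_{s,a}-\min_{\*p}\*p\tr\hat{\*z}_{s,a}\bigr)=\max_a\beta_{\hat{\*z}_{s,a}}^{s,a}(\*w,\psi)$, where I used $\hat{\*z}_{s,a}=\*r_{s,a}+\gamma\hat{\*v}$. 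Decomposing $\check{\*v}-\hat{\*v}=(\BellT\check{\*v}-\BellT\hat{\*v})+(\BellT\hat{\*v}-\Bell\hat{\*v})$ and bounding the first term via the $\gamma$-contraction of $\BellT$ in the sup-norm yields, componentwise, $\check{\*v}_s-\hat{\*v}_s\le\gamma\|\check{\*v}-\hat{\*v}\|_\infty+\max_a\beta_{\hat{\*z}_{s,a}}^{s,a}(\*w,\psi)$. Taking the maximum over $s$ and solving for $\|\check{\*v}-\hat{\*v}\|_\infty$ gives $\|\check{\*v}-\hat{\*v}\|_\infty\le\tfrac{1}{1-\gamma}\max_{s,a}\beta_{\hat{\*z}_{s,a}}^{s,a}(\*w,\psi)$; since $\*p_0\in\Delta^S$ and $\check{\*v}\ge\hat{\*v}$, we conclude $\check\rho-\hat\rho=\*p_0\tr(\check{\*v}-\hat{\*v})\le\|\check{\*v}-\hat{\*v}\|_\infty$, which closes the chain.

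I expect the main obstacle to be the middle step $y\opt\le\check\rho$: it requires converting the probabilistic quantile constraint of \eqref{eq:percentile_Bayesian} into a deterministic comparison with the optimistic RMDP, and this is the part that genuinely relies on $\delta<\nicefrac{1}{2}$. The surrounding steps are more routine, but care is needed to evaluate the operator gap at $\hat{\*v}$ (rather than at $\check{\*v}$) so that the span appears along $\hat{\*z}_{s,a}$ exactly as in the statement.
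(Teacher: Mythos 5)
Your proof is correct, and it takes a genuinely (if modestly) different route from the paper's. Writing $\beta \eqdef \max_{s\in\states}\max_{a\in\actions}\beta^{s,a}_{\hat{\*z}_{s,a}}(\*w,\psi)$, the paper works pathwise on the event $\{\tilde P\in\hat\ambset\}$: it takes $\tilde\pi$ optimal for the \emph{realized} $\tilde P$, lets $\tilde{\*v}$ be the fixed point of $\BellT_{\tilde\pi}^{\tilde P}$, bounds the Bellman residual $(\BellT_{\tilde\pi}^{\tilde P}\hat{\*v}-\hat{\*v})_s \le \max_{a}\beta^{s,a}_{\hat{\*z}_{s,a}}(\*w,\psi)$ using the optimality of $\hat\pi$ in the robust operator together with $\tilde P\in\hat\ambset$, and then contracts to conclude $\max_{\pi\in\Pi}\rho(\pi,\tilde P)\le\hat\rho+\beta/(1-\gamma)\eqdef\zeta$ on that event; the same $\delta\ge 1-\delta$ contradiction you invoke is then run against $\zeta$ directly. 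You instead insert the \emph{deterministic} optimistic value $\check\rho$ between $y\opt$ and the span bound, which factors the argument cleanly into (i) a one-line quantile step $y\opt\le\check\rho$ --- the only place where the event and $\delta<\nicefrac{1}{2}$ enter --- and (ii) a purely deterministic operator-gap bound $\check\rho-\hat\rho\le\beta/(1-\gamma)$, obtained by the same residual-plus-contraction mechanics but between the policy-independent operators $\BellT$ and $\Bell$ rather than between $\BellT_{\tilde\pi}^{\tilde P}$ and $\Bell$. Your insistence on evaluating the gap at $\hat{\*v}$ is exactly right: it is what makes the span appear along $\hat{\*z}_{s,a}=\*r_{s,a}+\gamma\hat{\*v}$ as in the statement, and the identity $\max_{\*p\in\ambset_{s,a}}\*p\tr\hat{\*z}_{s,a}-\min_{\*p\in\ambset_{s,a}}\*p\tr\hat{\*z}_{s,a}=\beta^{s,a}_{\hat{\*z}_{s,a}}(\*w,\psi)$ holds with equality. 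What each approach buys: yours isolates all probabilistic content in one quantile comparison and makes the span bound conditioning-free, at the cost of needing SA-rectangularity so that $\check\rho=\*p_0\tr\check{\*v}$ (available here since the hypothesis fixes $\hat\ambset=\ambset(w,\psi)$); the paper bounds the pointwise tighter random quantity $\max_{\pi\in\Pi}\rho(\pi,\tilde P)$ directly, though both decompositions land on the identical final bound. Your handling of the left inequality, via feasibility of $(\hat\pi,\hat\rho)$ from \cref{thm:ambiguity_sufficient}, coincides with the paper's.
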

The proof can be found in \cref{app:sec_framework}.



%


\begin{figure}
	\centering
	\includegraphics[width=0.6\linewidth]{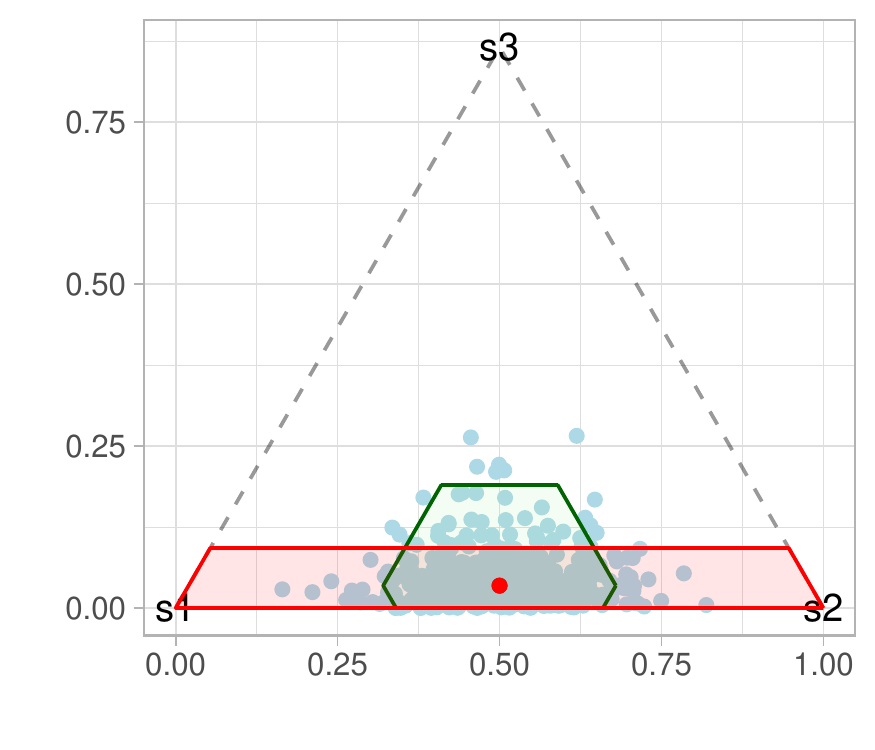}
	\caption{Posterior samples of $\tilde{\*p}$ (blue) and ambiguity sets $\mathcal{P}^{\operatorname{std}}$ (green) and $\mathcal{P}^{\operatorname{opt}}$ (red) from \cref{exm:ambset}.}
	\label{fig:bayes_amb_set}
\end{figure}


The following illustrates how the span along $\hat{\*z}$ impacts the performance loss of the RMDP policy.
\begin{example}\label{exm:ambset}
Consider an MDP with states $\{0,1,2,3\}$ and a single action $\{1\}$. The state $0$ is initial, and the states $1,2,3$ are terminal with $P(i,1,i) = 1,i = 1,2,3$ with zero rewards. To keep the notation simple, we assume that it is only possible to transition from state $0$ to states $1,2,3$. The transition \emph{probability} $\tilde{\*p}_{0,1}$ is uncertain and distributed as $\tilde{\*p}_{0,1}\sim\operatorname{Dirichlet}(10,10,1)$ with $\E[\tilde{\*p}_{0,1}] = [0.48,0.48,0.04]$. The rewards are $\*r_{0,1} = [0.25, 0.25, -1]$. The goal is to maximize the percentile criterion with $\delta = 0.2$.
\end{example}

Take the MDP from \cref{exm:ambset} and construct RMDPs with the following two ambiguity sets depicted in \cref{fig:bayes_amb_set}. Let $\ambset^{\operatorname{std}} = \ambset_{1,1}(\nicefrac{1}{\sqrt{3}}\cdot \one, 0.1)$ be the standard ambiguity set with uniform weights, and let $\ambset^{\operatorname{opt}} = \ambset_{1,1}(\nicefrac{1}{\sqrt{1.12}} \cdot [0.25,0.25,1], 0.1)$ be an ambiguity set with optimized weights $\*w = \nicefrac{1}{\sqrt{1.12}} \cdot [0.25,0.25,1]$. The budgets for both ambiguity sets are minimally sufficient to satisfy \cref{asm:inset}. Intuitively, this means that at least $80\%$ of the posterior samples of $\tilde{\*p}_{0,1}$ (blue dots in \cref{fig:bayes_amb_set}) must be contained inside of each ambiguity set. Now, with $80\%$ confidence, the RMDP with $\ambset^{\operatorname{opt}}$ guarantees return  $\hat{\rho}^{\operatorname{opt}} = 0.16$, while the RMDP with $\ambset^{\operatorname{std}}$ guarantees only $\hat{\rho}^{\operatorname{std}} = -0.06$. Although the volumes of $\ambset^{\operatorname{std}}$ and $\ambset^{\operatorname{opt}}$ are approximately equal, the span along the dimension $\*z = [0.25, 0.25, -1]$ of $\ambset^{\operatorname{opt}}$ is half of the span of $\ambset^{\operatorname{std}}$.





\IncMargin{1.2em}
\begin{algorithm*}
	\KwIn{Confidence $1-\delta$, posterior distribution $f$ over $\tilde{P}$}
	\KwOut{Ambiguity set $\ambset(\*w, \psi)$ }
	\BlankLine
	Compute $\*v' \in \Real^S$ by solving $\max_{\pi} \; \rho\bigl(\pi,\E\bigl[\tilde{P}\bigr]\bigr)$ and
	let $\*z'_{s,a} \gets \*r_{s,a} + \gamma \cdot \*v',\,s\in\states,a\in\actions$\;
 	Compute minimal $\psi': \states\times\actions\to\Real_+$ such that \cref{asm:inset} holds for $\ambset(\nicefrac{1}{\sqrt{S}}\cdot \one,\psi')$\tcp*{\cref{alg:bayes}}
	Compute $\*w_{s,a} \gets \min_{\*w\in\Real^S_+} \; \{ \beta_{\*z'}^{s,a}(\*w, \psi')	\ss \norm{\*w}_2 = 1 \}$ for each $s\in\states$, $a\in\actions$\tcp*{\cref{alg:weight_optimization}}
 	Compute minimal $\psi: \states\times\actions\to\Real_+$ such that \cref{asm:inset} holds for $\ambset(\*w,\psi)$\tcp*{\cref{alg:bayes}}
 	\Return{Ambiguity set $\ambset(\*w, \psi)$}
	\caption{Ambiguity shape optimization scheme.} \label{alg:heuristic}
\end{algorithm*}

Armed with the safety and performance loss guarantees in \cref{thm:ambiguity_sufficient,thm:sub_optimality}, we propose a new heuristic algorithm in \cref{alg:heuristic} which iteratively optimizes the shape of the ambiguity set in order to improve the guaranteed percentile. It constructs ambiguity sets that minimize the span of the ambiguity set. The algorithm may not construct the optimal ambiguity set because it first uses the nominal value function $\*v'$. However, the algorithm provides guarantees  on the quality of the policy that it computes from \cref{asm:inset} and \cref{thm:ambiguity_sufficient,thm:sub_optimality}.


\section{Minimizing Ambiguity Spans} \label{sec:shape}

This section describes tractable algorithms that optimize the weights $\*w$ to minimize that span $\beta_{\*z}^{s,a}$ for some fixed state $s\in\states$, action $a\in\actions$, a vector $\*z \in\Real^S$, and a budget $\psi \in\Real_+$. We describe an analytical solution and a conic formulation that minimize an upper bound on the span for weighted $L_1$ and $L_\infty$ sets. The budget $\psi$ is fixed throughout this section; \cref{sec:size} describes how to optimize it.

The goal of computing the weights $\*w$ that minimize the span of the ambiguity set for a fixed budget $\psi$ can be formalized as the following optimization problem:
\begin{equation} \label{eq:shape_objective}
\min_{\*w\in\Real^S_+} \; \Bigl\{ \beta_{\*z}^{s,a}(\*w, \psi) \ss \norm{\*w}_2 = 1 \Bigr\}.
\end{equation}
The optimization in~\eqref{eq:shape_objective} is not obviously convex, but we propose methods that minimize an \emph{upper} bound on $\beta_{\*z}^{s,a}(\*w, \psi)$. Note that minimizing this upper bound also minimizes an upper bound on \cref{thm:sub_optimality}.

We first describe two analytical solutions and then describe a more precise but also more computationally intensive method based on second order conic approximation. The following lemma provides a bound that enables efficient optimization.
\begin{lemma}\label{thm:choose_weights}
	The span $\beta_{\*z}^{s,a}$ of the ambiguity set $\ambset_{s,a}(\*w,\psi)$ is bounded for any $\lambda\in\Real$ as:
	\begin{align}
		\beta_{\*z}^{s,a}(\*w, \psi)
		%
		\label{eq:upperbound_negativity}
		&\le 2 \cdot \psi \cdot \norm{ \*z - \lambda \cdot \*1}_\star ~,
	\end{align}
	where $\norm{\cdot}_\star$ is the norm dual to $\norm{\cdot}_{\*w}$.
\end{lemma}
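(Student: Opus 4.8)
The plan is to exploit the fact that the span is computed between two points that both lie in the probability simplex, which lets me introduce the free parameter $\lambda$ at no cost. I would fix an arbitrary feasible pair $\*p_1, \*p_2 \in \ambset_{s,a}(\*w,\psi)$ and note that, since both are probability vectors, $\*1\tr \*p_1 = \*1\tr \*p_2 = 1$ and hence $\*1\tr(\*p_1 - \*p_2) = 0$. Consequently, for every $\lambda \in \Real$,
\[
(\*p_1 - \*p_2)\tr \*z \;=\; (\*p_1 - \*p_2)\tr (\*z - \lambda \cdot \*1),
\]
because the subtracted term $\lambda \cdot \*1\tr(\*p_1 - \*p_2)$ vanishes. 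This identity is what makes the final bound valid for an arbitrary $\lambda$.

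Next I would invoke the defining inequality of the dual norm (the generalized H\"older inequality) on the right-hand side:
\[
(\*p_1 - \*p_2)\tr (\*z - \lambda \cdot \*1) \;\le\; \norm{\*p_1 - \*p_2}_{\*w} \cdot \norm{\*z - \lambda \cdot \*1}_\star,
\]
which holds for the generic weighted norm and its dual, so no explicit form of $\norm{\cdot}_\star$ is needed at this stage. It then remains to control $\norm{\*p_1 - \*p_2}_{\*w}$. Decomposing $\*p_1 - \*p_2 = (\*p_1 - \bar{\*p}\sa) - (\*p_2 - \bar{\*p}\sa)$ and applying the triangle inequality together with the membership constraint $\norm{\*p_i - \bar{\*p}\sa}_{\*w} \le \psi$ built into the definition of $\ambset_{s,a}(\*w,\psi)$ gives $\norm{\*p_1 - \*p_2}_{\*w} \le 2\psi$.

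Chaining these inequalities yields $(\*p_1 - \*p_2)\tr \*z \le 2 \cdot \psi \cdot \norm{\*z - \lambda \cdot \*1}_\star$ for every feasible pair, and since the right-hand side is independent of $\*p_1$ and $\*p_2$, taking the maximum over $\ambset_{s,a}(\*w,\psi)$ preserves the inequality and delivers the claimed bound on $\beta_{\*z}^{s,a}(\*w,\psi)$. I do not anticipate a real obstacle in this argument; the only step that deserves care is the first one---recognizing that the simplex constraint forces $\*1\tr(\*p_1 - \*p_2) = 0$ so that $\*z$ may be shifted freely by any multiple of $\*1$. This degree of freedom is exactly what later allows minimizing over $\lambda$ to tighten the span bound, so I would present it explicitly rather than as an incidental manipulation.
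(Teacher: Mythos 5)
Your proof is correct, but it follows a genuinely different and more elementary route than the paper's. You argue pointwise: the simplex membership built into $\ambset_{s,a}(\*w,\psi)$ forces $\one\tr(\*p_1-\*p_2)=0$, so the objective can be shifted to $(\*p_1-\*p_2)\tr(\*z-\lambda\cdot\one)$ for free; the generalized H\"older inequality for a norm and its dual then gives $(\*p_1-\*p_2)\tr(\*z-\lambda\cdot\one)\le\norm{\*p_1-\*p_2}_{\*w}\cdot\norm{\*z-\lambda\cdot\one}_\star$; and the triangle inequality through the nominal point $\bar{\*p}\sa$ gives $\norm{\*p_1-\*p_2}_{\*w}\le 2\psi$. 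The paper instead argues through the optimization problem itself: it relaxes the non-negativity constraints, recenters via $\*q_i=\*p_i-\bar{\*p}\sa$, splits the joint maximization into two identical problems using absolute homogeneity of the norm (this is where its factor of $2$ arises, playing the role of your triangle-inequality step), and only then introduces $\lambda$ as a Lagrange multiplier for the remaining constraint $\one\tr\*q=0$, recognizing the dual norm in the resulting Lagrangian. Both derivations yield exactly the same bound, and both in fact establish the slightly stronger statement $\beta_{\*z}^{s,a}(\*w,\psi)\le 2\cdot\psi\cdot\min_{\lambda\in\Real}\norm{\*z-\lambda\cdot\one}_\star$. What your version buys is brevity and transparency: it needs nothing beyond the norm axioms and the definition of the dual norm, requires no relaxation or duality argument, and makes clear from the first line why an arbitrary $\lambda$ is admissible. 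What the paper's version buys is an explicit intermediate optimization formulation of the (relaxed) span, which it reuses downstream --- the SOCP formulation \eqref{eq:socp} is read off directly from that intermediate problem --- and a Lagrangian viewpoint that localizes the slack in \eqref{eq:upperbound_negativity} to the dropped constraints $\*p_i\ge\zero$.
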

The proof is deferred to \cref{app:sec_shape}. Recall that the \emph{dual norm} is defined as $\norm{\*c}_{\star}=\max_{\*x\in \Real^S}\, \left\{\*c\tr \*x \ss \norm{\*x} \le 1 \right\}$.

In order to use the bound in \cref{thm:choose_weights}, we need to derive the dual norms to the weighted $L_1$ and weighted $L_\infty$ norms. For unweighted $p$-norms, it is well known that $L_1$ and $L_\infty$ norms are dual of each other, but we are not aware of a similar result for their weighted variants. The following lemma establishes that weighted $L_1$ and $L_\infty$ norms are dual as long as their weights are inverse elementwise.
\begin{lemma} \label{th:l1_l8} Suppose that $\*w\in\Real^S$ and $\*w'\in\Real^S$ are positive $w_i > 0, w_i' > 0$ and satisfy that $w_i' = \nicefrac{1}{w_i}$ for all $i\in\states$. Then:
\[
\norm{\*z}_{\infty,\*w'} \;=\; \max_{\*x\in\Real^S}\, \left\{ \*z \tr \*x \ss \norm{\*x}\lw = 1 \right\}  .
\]
\end{lemma}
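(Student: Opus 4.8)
The plan is to prove the claimed identity by establishing the two matching inequalities that characterize a dual norm: an upper bound via a H\"older-type estimate and a lower bound via an explicit maximizer. An equivalent and arguably cleaner route is a diagonal change of variables that reduces the weighted statement to the textbook duality between the unweighted $L_1$ and $L_\infty$ norms; I would mention both, since the substitution makes transparent \emph{why} the weights must be inverse elementwise. Throughout, the positivity hypothesis $w_i > 0$ is exactly what makes both $w_i' = \nicefrac{1}{w_i}$ and the divisions below well defined.

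For the upper bound, I would take an arbitrary $\*x\in\Real^S$ with $\norm{\*x}\lw = 1$ and rewrite $\*z\tr\*x = \sum_{i\in\states} z_i x_i = \sum_{i\in\states} (z_i/w_i)\,(w_i x_i)$. Factoring out the largest ratio gives $\*z\tr\*x \le \bigl(\max_{i}|z_i|/w_i\bigr)\cdot \sum_{i} w_i\,\lvert x_i\rvert = \norm{\*z}_{\infty,\*w'}\cdot\norm{\*x}\lw = \norm{\*z}_{\infty,\*w'}$, where the first equality uses $w_i' = \nicefrac{1}{w_i}$. Hence the maximum over the unit $\norm{\cdot}\lw$-sphere is at most $\norm{\*z}_{\infty,\*w'}$. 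For the lower bound, I would exhibit a single feasible point attaining this value: let $k\in\argmax_{i}|z_i|/w_i$, set $x_k = \operatorname{sign}(z_k)/w_k$, and let all other coordinates vanish. Then $\norm{\*x}\lw = w_k\cdot\lvert x_k\rvert = 1$, so $\*x$ is feasible, while $\*z\tr\*x = z_k\operatorname{sign}(z_k)/w_k = \lvert z_k\rvert/w_k = \norm{\*z}_{\infty,\*w'}$. Combining the two bounds yields equality. (When the feasible set is written as $\norm{\*x}\lw\le 1$, as in the definition of $\norm{\cdot}_\star$, the two formulations coincide because a linear functional over a norm ball attains its maximum on the boundary, the degenerate case $\*z=\zero$ giving $0$ on both sides.)

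The change-of-variables alternative substitutes $y_i = w_i x_i$, so that $\norm{\*x}\lw = \norm{\*y}_1$ and $\*z\tr\*x = \sum_{i}(z_i/w_i)\,y_i$; the problem then becomes the standard dual of the $L_1$ norm, whose value is the unweighted $L_\infty$ norm of the transformed coefficient vector, namely $\max_i |z_i|/w_i = \norm{\*z}_{\infty,\*w'}$.

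I do not anticipate a genuine obstacle: the argument is a weighted H\"older inequality together with an explicit maximizer. The only points requiring a little care are the bookkeeping that pairs each weight $w_i$ with its reciprocal $w_i'$ correctly, verifying that the proposed maximizer sits exactly on the sphere $\norm{\*x}\lw = 1$, and handling the trivial case $\*z = \zero$, where both sides equal zero and $k$ may be chosen arbitrarily.
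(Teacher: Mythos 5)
Your proposal is correct, and it is in fact more complete than the proof printed in the paper. For the upper bound your argument coincides with the paper's: factor $z_i x_i = (z_i/w_i)(w_i x_i)$, pull out $\max_i \lvert z_i\rvert/w_i$, and use the constraint on $\norm{\*x}\lw$ --- though you correctly attribute this to a weighted H\"older-type estimate, whereas the paper mislabels the elementary step $\lvert x_i \cdot z_i\rvert \le \lvert x_i\rvert\cdot\lvert z_i\rvert$ as ``Cauchy--Schwarz.'' The genuine difference is that the paper stops there: its displayed chain only establishes $\*z\tr\*x \le \norm{\*z}_{\infty,\*w'}$ for every feasible $\*x$, i.e., one inequality of the claimed equality, and never exhibits a point attaining the bound. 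Your explicit maximizer ($x_k = \operatorname{sign}(z_k)/w_k$ at $k\in\argmax_i \lvert z_i\rvert/w_i$, zero elsewhere), together with the verification that it lies exactly on the sphere $\norm{\*x}\lw = 1$ and your handling of the degenerate case $\*z = \zero$, supplies the missing attainment direction, so your write-up proves the lemma as stated while the paper's argument, taken literally, does not. Your diagonal change of variables $y_i = w_i x_i$, reducing the claim to the textbook duality between unweighted $L_1$ and $L_\infty$, is likewise absent from the paper and buys a one-line conceptual explanation of why the weights must be inverse elementwise; either of your two routes would serve as a complete replacement for the paper's half-proof.
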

\vspace{-3mm}
The proof of the lemma can be found in \cref{app:sec_shape}.



\begin{algorithm}
	\KwIn{Norm $q\in\{1,\infty\}$, parameter $\lambda\in\Real$}
	\KwOut{Weights $\*w\opt\in\Real_+^S$ that minimize \eqref{eq:upperbound_negativity}}
	\uIf{$q = 1$}{
		$ w_i\opt \gets \frac{ \lvert z_i - \lambda \rvert^{1/3}}{\sqrt{\sum_{j=1}^S {\lvert z_j - \lambda \rvert^{2/3}}} },\, \forall i\in\states $ \;
		}
	\ElseIf{$q = \infty$}{
		$w_i\opt \gets \frac{\lvert z_i - \lambda \rvert}{\sqrt{\sum_{j=1}^S \lvert z_j - \lambda \rvert^2} }, \, \forall i\in\states $ \;
	 }
	\Return $\*w\opt$ \;
	\caption{Weight optimization.} \label{alg:weight_optimization}
\end{algorithm}

Based on the results above, \cref{alg:weight_optimization} summarizes our algorithms for computing weights $\*w$ that minimize the upper bound on the performance loss in \cref{thm:sub_optimality}. The algorithm runs in linear time. Note that the algorithm assumes that a value of $\lambda$ is given. Although it would be possible to optimize for the best value of $\lambda$, our preliminary experimental results suggest that this is not worthwhile because it does not lead to a significant improvement. Instead, we use $\lambda = (\max_i z_i + \min_i z_i) / 2$ and $\lambda = \operatorname{median}(\*z)$ for $L_\infty$ and $L_1$ norms respectively. These are the optimal values (values for which the upper bound is smallest) for the uniform weight version of \eqref{eq:upperbound_negativity}. The following proposition states the correctness of this algorithm.
\begin{proposition} \label{prop:weights_analytical}
Fix an arbitrary $\lambda\in\Real$ and let $\*w\opt\in\Real_+^S$ the return from \cref{alg:weight_optimization}. Then $\*w\opt$ is an optimal solution to \eqref{eq:upperbound_negativity} weighted $L_1$ and $L_\infty$ norms.
\end{proposition}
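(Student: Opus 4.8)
The plan is to reduce the norm-minimization hidden in \eqref{eq:upperbound_negativity} to two elementary scalar problems and solve each in closed form. First I would use \cref{th:l1_l8} to make the dual norm $\norm{\cdot}_\star$ explicit: since the weighted $L_1$ ball has dual $\norm{\cdot}_{\infty,\nicefrac{1}{\*w}}$ and the weighted $L_\infty$ ball has dual $\norm{\cdot}_{1,\nicefrac{1}{\*w}}$, writing $\*c = \*z - \lambda\cdot\one$ and discarding the constant factor $2\psi$, the objective of \cref{thm:choose_weights} becomes either $\sum_{i\in\states}\abs{c_i}/w_i$ or $\max_{i\in\states}\abs{c_i}/w_i$, to be minimized over $\*w\in\Real^S_+$ with $\norm{\*w}_2 = 1$. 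The sum-objective (the weighted $L_\infty$ set, whose dual is the weighted $L_1$ norm) corresponds to the $q=1$ branch of \cref{alg:weight_optimization}, and the max-objective to the $q=\infty$ branch. This reduction is the conceptual heart; everything afterward is a constrained optimization of a single vector.

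For the max-objective I would argue that every minimizer equalizes the active ratios: if $\abs{c_j}/w_j$ were strictly below the maximum, one could shrink $w_j$ and reallocate the freed $\ell_2$-budget to a maximizing coordinate, strictly lowering the objective. Hence $\abs{c_i}/w_i = t$ for all coordinates with $c_i\neq 0$, and $\sum_i w_i^2 = 1$ then forces $t = \norm{\*c}_2$ and $w_i = \abs{c_i}/\norm{\*c}_2$, which is exactly the linear formula returned by the algorithm. For the sum-objective the clean route is H\"older's inequality with conjugate exponents $(\nicefrac{3}{2},3)$: writing $\abs{c_i}^{2/3} = \bigl(\abs{c_i}/w_i\bigr)^{2/3}\,w_i^{2/3}$ and applying H\"older gives
\[
\sum_{i\in\states} \abs{c_i}^{2/3} \;=\; \sum_{i\in\states}\bigl(\abs{c_i}/w_i\bigr)^{2/3}\, w_i^{2/3} \;\le\; \Bigl(\sum_{i\in\states}\abs{c_i}/w_i\Bigr)^{2/3}\Bigl(\sum_{i\in\states} w_i^2\Bigr)^{1/3} \;=\; \Bigl(\sum_{i\in\states}\abs{c_i}/w_i\Bigr)^{2/3},
\]
so that $\sum_i \abs{c_i}/w_i \ge \bigl(\sum_i\abs{c_i}^{2/3}\bigr)^{3/2}$ for every feasible $\*w$. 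Equality in H\"older holds precisely when $\abs{c_i}\propto w_i^3$, i.e.\ $w_i \propto \abs{c_i}^{1/3}$; normalizing to $\norm{\*w}_2 = 1$ recovers the cube-root formula of the algorithm. Because this yields a feasible-point-independent lower bound that the claimed $\*w\opt$ attains, it certifies global optimality directly.

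The step I expect to be the main obstacle is exactly this global-optimality certification. The feasible set $\{\norm{\*w}_2 = 1\}$ is a sphere and hence non-convex, and neither objective is jointly convex in $\*w$, so a Lagrange/KKT stationarity computation alone would only exhibit a candidate critical point without ruling out competing minimizers or degenerate behavior as some $w_i\to 0$. Routing the two cases through H\"older's inequality and through the equalization/exchange argument sidesteps this, since both deliver tight bounds that hold over the entire feasible set. I would finally handle coordinates with $c_i = 0$ separately: the formulas assign them weight $0$, and because $(\*p_1-\*p_2)\tr(\lambda\one)=0$ on the simplex such coordinates never contribute to the span, so they can be removed from the support of $\*c$ without affecting the optimum, which keeps the ratios $\abs{c_i}/w_i$ well defined throughout.
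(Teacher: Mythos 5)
Your proposal is correct and arrives at exactly the two closed-form formulas of \cref{alg:weight_optimization}, via the same initial reduction as the paper — using \cref{th:l1_l8} to make the dual norm explicit and minimizing $\max_i \abs{c_i}/w_i$ or $\sum_i \abs{c_i}/w_i$ over $\norm{\*w}_2 = 1$, where $\*c = \*z - \lambda\cdot\one$ — but your certification of optimality takes a genuinely different and arguably stronger route. The paper's proof in \cref{app:sec_shape} argues through necessary conditions: for the max objective it asserts convexity and that all constraints are active at the optimum (citing Bertsekas), and for the sum objective it extracts a stationary point via a Lagrange multiplier; as written, neither step certifies \emph{global} optimality, since the equality constraint $\sum_i w_i^2 = 1$ is a sphere (non-convex), and the convexity claim needs the unstated repair of relaxing the sphere to the ball $\sum_i w_i^2 \le 1$ and observing that the objective is coordinatewise decreasing in $\*w$, so the optimum lies on the boundary. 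Your H\"older argument with exponents $(\nicefrac{3}{2},3)$ instead produces a lower bound $\bigl(\sum_i \abs{c_i}^{2/3}\bigr)^{3/2}$ valid over the entire feasible set, which the cube-root weights attain — a direct global certificate. For the max case your exchange/equalization argument is sound but tacitly relies on the existence of a minimizer (which does hold, by compactness of the feasible set and blow-up of the objective as any support weight tends to zero); you could bypass even this with a one-line Cauchy--Schwarz certificate, $\norm{\*c}_2^2 = \sum_i (\abs{c_i}/w_i)^2\, w_i^2 \le \bigl(\max_i \abs{c_i}/w_i\bigr)^2 \sum_i w_i^2$, giving $\max_i \abs{c_i}/w_i \ge \norm{\*c}_2$ with equality exactly at the linear weights. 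Your explicit handling of coordinates with $c_i = 0$ (assigning them zero weight and removing them from the support) also covers a degenerate case that the paper's division-by-$w_i$ formulas silently gloss over. In short: same reduction, same answer, but your global-optimality certificates are more self-contained and close a rigor gap in the paper's Lagrange-based sketch.
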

Please see \cref{app:sec_shape} for the proof.


It is important to recognize that even though \cref{alg:weight_optimization} effectively minimizes the value $\beta^{s,a}_{\*z}$, it may, in the process, violate \cref{asm:inset}. This is because scaling weights may reduce the probability that $\tilde{P}\in\ambset$. We are not aware of a tractable algorithm that can optimize the weights $\*w$ directly while enforcing the constraint of \cref{asm:inset}. Instead, the constraint $\norm{\*w}_2 = \psi$ serves as a proxy to prevent the ambiguity from shrinking. This is why it is necessary to re-optimize the budget $\psi$ in \cref{alg:heuristic} after the weights are optimized.

As an alternative to the analytical algorithms in \cref{alg:weight_optimization}, we also examine a Second-Order Conic Program~(SOCP) formulation. This formulation optimizes a tighter upper bound on $\beta^{s,a}_{\*z}$ but is more computationally intensive. For any fixed state $s$ and action $a$, the following SOCP minimizes the bound \eqref{eq:upperbound_negativity} on $\beta_{\*z}^{s,a}(\*w,\psi)$ for the $L_1$ norm:
\begin{equation} \label{eq:socp}
\begin{mprog}
	\minimize{\*g,c,\lambda} \psi \cdot c 
	\stc \*g \;\ge\; \max\{ \*z - \lambda \cdot \one, -\*z + \lambda \cdot \one\}
	\cs \*g\tr \*g \;\le\; c^2, \quad \*g \;\ge\; \zero~.
\end{mprog}
\end{equation}
The SOCP formulation follows from \cref{th:l1_l8} and variable substitution $\*g = \*w \cdot c$. \\


\begin{remark}[Unreachable states]
We assume that the prior can specify some transitions as impossible, or unreachable: that is $P(s,a,s') = 0$. This information is used as an additional pre-processing step in optimizing the weights. In particular, if the transition from state $s$ after taking action $a$ to state $s'$ is not possible, then we set $(\*w_{s,a})_{s'} = \infty$. Or, in other words, each $\*p \in \ambset_{s,a}(\*w, \psi)$ satisfies $p_{s'} = 0$.
\end{remark}

\section{Minimizing Ambiguity Budgets} \label{sec:size}

This section describes how to determine the size of the ambiguity set in the Bayesian setting in order to minimize the performance loss in \cref{thm:sub_optimality} of the RMDP policy while satisfying \cref{asm:inset}. We assume that the weights $\*w_{s,a}, s\in\states, a\in\actions$ are arbitrary but fixed and aim to construct $\psi_{s,a},s\in\states,a\in\actions$ to minimize the performance loss.

Before describing the algorithm, we state a simple observation that motivates its construction. The following lemma implies that the smaller the ambiguity budget is, the better $\hat\rho$ approximates the percentile criterion. Of course, this is only true as long as the budget is sufficiently large for \cref{asm:inset} to hold. The following proposition follows from the definition of $\beta^{s,a}_{\*z}$ by algebraic manipulation.
\begin{lemma} \label{prop:obvious}
The function $\psi \mapsto \beta^{s,a}_{\*z}(\*w_{s,a}, \psi)$ is non-decreasing.
\end{lemma}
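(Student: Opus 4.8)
The plan is to show that $\psi \mapsto \beta^{s,a}_{\*z}(\*w_{s,a}, \psi)$ is non-decreasing by exploiting the fact that enlarging the budget enlarges the feasible set in the optimization defining the span. First I would recall the definition
\[
\beta_{\*z}^{s,a}(\*w,\psi) = \max_{\*p_1,\*p_2} \Bigl\{ (\*p_1 - \*p_2)\tr \*z \;\vert\; \*p_1,\*p_2 \in \ambset_{s,a}(\*w,\psi) \Bigr\},
\]
and observe that the objective $(\*p_1 - \*p_2)\tr \*z$ does not depend on $\psi$ at all; only the feasible region $\ambset_{s,a}(\*w,\psi)$ does. So the statement reduces to a monotonicity-of-the-feasible-set argument.

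The key step is to verify that the ambiguity set is monotone in $\psi$, i.e., that $\psi_1 \le \psi_2$ implies $\ambset_{s,a}(\*w,\psi_1) \subseteq \ambset_{s,a}(\*w,\psi_2)$. This follows directly from the definition in \eqref{eq:ambiguity_set_poly}: a point $\*p$ belongs to $\ambset_{s,a}(\*w,\psi)$ precisely when $\*p\in\simplexs$ and $\norm{\*p - \bar{\*p}\sa}_{\*w} \le \psi$. If the norm constraint holds for the smaller bound $\psi_1$, then since $\psi_1 \le \psi_2$ it automatically holds for $\psi_2$, and the simplex membership is unchanged. Hence every feasible $\*p$ for $\psi_1$ is feasible for $\psi_2$.

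Once the set inclusion is established, I would conclude by the standard principle that maximizing a fixed objective over a larger feasible region yields a value that is at least as large. Concretely, given $\psi_1 \le \psi_2$, any feasible pair $(\*p_1,\*p_2)$ for $\psi_1$ (i.e., both in $\ambset_{s,a}(\*w,\psi_1)$) is also feasible for $\psi_2$ by the inclusion above, so the maximum over the $\psi_2$-feasible pairs cannot be smaller than the maximum over the $\psi_1$-feasible pairs. This gives $\beta^{s,a}_{\*z}(\*w,\psi_1) \le \beta^{s,a}_{\*z}(\*w,\psi_2)$, which is exactly monotonicity.

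I do not anticipate a genuine obstacle here, as the paper itself flags this as following "by algebraic manipulation." The only point requiring a little care is confirming that the feasible set is never empty as $\psi$ varies over the relevant range, so that the max is well defined; this is immediate because the nominal mean $\bar{\*p}\sa \in \simplexs$ always satisfies $\norm{\bar{\*p}\sa - \bar{\*p}\sa}_{\*w} = 0 \le \psi$, so $\ambset_{s,a}(\*w,\psi)$ is nonempty for every $\psi \ge 0$. With well-definedness secured, the monotone-feasible-set argument completes the proof.
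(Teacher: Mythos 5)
Your proof is correct and matches the paper's intent exactly: the paper offers no explicit proof, stating only that the lemma ``follows from the definition of $\beta^{s,a}_{\*z}$ by algebraic manipulation,'' and the intended manipulation is precisely your nested-feasible-set argument ($\psi_1 \le \psi_2$ implies $\ambset_{s,a}(\*w,\psi_1) \subseteq \ambset_{s,a}(\*w,\psi_2)$, so the maximum of the $\psi$-independent objective cannot decrease). Your additional observation that the set is nonempty because $\bar{\*p}_{s,a} \in \simplexs$ is a welcome bit of care that the paper omits.
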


\begin{algorithm}
\KwIn{Posterior samples $P_1, \ldots, P_n$ from $\tilde{P}$,
	weights $\*w_{s,a}$, norm $q\in \{1,\infty\}$}
\KwOut{Nominal $\bar{\*p}_{s,a}$ and budget $\psi_{s,a}$}
Compute nominal $\bar{\*p}_{s,a} \gets (1/n) \sum_{i=1}^n P_i(s,a) $ \;
Compute distance $d_i \gets \norm{P_i(s,a) - \bar{\*p}_{s,a} }_{q,\*w_{s,a}}$ \;
Ascending sort: $d_{(j)} \le d_{(j+1)},\,j=1,\ldots,n$\;
Compute the quantile $\psi_{s,a} \gets d_{(\ceil{(1-\delta/(S\cdot A)) \cdot n})}$ \;
\Return{$\bar{\*p}_{s,a}$ and $\psi_{s,a}$}
\caption{Budget optimization.} \label{alg:bayes}
\end{algorithm}

We are now ready to describe our method as outlined in \cref{alg:bayes}. The algorithm follows the well-known sample average approximation~(SAA) approach common in stochastic programming~\cite{Shapiro2014}. It constructs ambiguity sets as \emph{credible regions} for the posterior distribution over $\tilde{P}$ similarly to prior work~\citep{petrik2019beyond}. The next proposition states the correctness of \cref{alg:bayes}.
\begin{proposition} \label{prop:size_correct}
Suppose that $\psi_{s,a}$ are computed by \cref{alg:bayes} for some $\*w_{s,a}$ for each $s\in\states$ and $a\in\actions$. Also let $w :  (s,a) \mapsto \*w_{s,a}$ and $\psi : (s,a) \mapsto \psi_{s,a}$. Then  $\ambset(w, \psi)$ satisfies \cref{asm:inset} with high probability when a sufficient number of samples from $\tilde{P}$ are used.
\end{proposition}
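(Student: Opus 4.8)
The plan is to reduce the joint coverage statement to $S A$ one-dimensional quantile-estimation problems and then invoke sample-average-approximation (SAA) consistency on each. First I would exploit the SA-rectangularity of $\ambset(w,\psi)$: by construction $\tilde P \in \hat\ambset$ holds if and only if $\tilde P(s,a) \in \ambset_{s,a}(\*w_{s,a},\psi_{s,a})$ for every $(s,a)$, so a union bound on the complementary events gives
\[
\P_{\tilde P}\bigl[\tilde P \notin \hat\ambset\bigr] \;\le\; \sum_{s\in\states}\sum_{a\in\actions} \P_{\tilde P}\bigl[\tilde P(s,a) \notin \ambset_{s,a}(\*w_{s,a},\psi_{s,a})\bigr].
\]
Hence it suffices to prove that each marginal failure probability is at most $\delta/(S A)$, which matches the level $1-\delta/(SA)$ used in the quantile index of \cref{alg:bayes}; summing over the $S A$ pairs then recovers the bound $\delta$ required by \cref{asm:inset}.

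Next, fix a pair $(s,a)$, abbreviate $\alpha = \delta/(SA)$ and $\*w = \*w_{s,a}$, and let $\mu = \E[\tilde P(s,a)]$ be the true posterior mean. The key observation is that $\tilde P(s,a)\in\ambset_{s,a}(\*w,\psi_{s,a})$ is exactly the event $\{\norm{\tilde P(s,a)-\bar{\*p}_{s,a}}_{q,\*w}\le\psi_{s,a}\}$, that the distances $d_1,\dots,d_n$ computed in \cref{alg:bayes} are (up to the shared nominal) i.i.d. copies of this same quantity, and that the fresh draw $\tilde P$ is independent of $P_1,\dots,P_n$. Writing $G$ for the c.d.f. of $D = \norm{\tilde P(s,a)-\mu}_{q,\*w}$ and $\theta = G^{-1}(1-\alpha)$ for its true $(1-\alpha)$-quantile, the value $\psi_{s,a}=d_{(\ceil{(1-\alpha)n})}$ is precisely the SAA estimate of $\theta$. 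I would then invoke the standard consistency of empirical quantiles (Glivenko--Cantelli, or the Beta law $G(d_{(k)})\sim\mathrm{Beta}(k,n-k+1)$ with $k=\ceil{(1-\alpha)n}$, made quantitative by the DKW inequality $\P[\sup_t\abs{\hat G_n(t)-G(t)}>\epsilon]\le 2e^{-2n\epsilon^2}$) to conclude that $\psi_{s,a}\to\theta$ and hence that the marginal coverage $\P_{\tilde P}[D\le\psi_{s,a}]$ tends to $1-\alpha$ as $n\to\infty$.

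Two perturbations must be controlled before concluding. First, the nominal used is the empirical mean $\bar{\*p}_{s,a}$ rather than $\mu$; since $\bar{\*p}_{s,a}\to\mu$ almost surely by the law of large numbers, the triangle inequality gives $\abs{\norm{\*p-\bar{\*p}_{s,a}}_{q,\*w}-\norm{\*p-\mu}_{q,\*w}}\le\norm{\bar{\*p}_{s,a}-\mu}_{q,\*w}=:\eta_n$ uniformly in $\*p$, and a Slutsky-type argument absorbs this vanishing bias into the quantile estimate. Second, the continuity of $G$ at $\theta$ (which I would assume, or handle by monotone limits) turns convergence of the threshold into convergence of the coverage. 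Combining these across all $SA$ pairs — for which one further union bound over the finitely many estimation events suffices — yields that, for $n$ large enough, the marginal coverage is at least $1-\alpha$ for every $(s,a)$ simultaneously with probability arbitrarily close to one, which is the claimed high-probability validity of \cref{asm:inset}.

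The step I expect to be the main obstacle is turning consistency of the empirical quantile into a genuine one-sided high-probability guarantee. With the exact index $k=\ceil{(1-\alpha)n}$ the estimate $d_{(k)}$ is asymptotically median-unbiased for $\theta$: because $G(d_{(k)})\sim\mathrm{Beta}(k,n-k+1)$ concentrates at $1-\alpha$ with $O(1/\sqrt n)$ fluctuations, the event $\{G(d_{(k)})\ge 1-\alpha\}$ — i.e. coverage at least $1-\alpha$ — holds only with probability roughly $1/2$, not with high probability. The honest resolution is either to read \cref{asm:inset} in the asymptotic SAA sense (coverage $\ge 1-\alpha-o(1)$, with the $o(1)$ quantified by DKW as $O(1/\sqrt n)$), or to inflate the quantile index by an $O(\sqrt n)$ margin so that the Beta mass below $1-\alpha$ becomes negligible. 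Either way, the gap between the median-unbiased empirical quantile and a true one-sided tolerance-region guarantee is where the real care is needed, whereas the union-bound reduction and the mean-perturbation bound are routine.
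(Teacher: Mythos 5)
Your proposal is correct and takes essentially the same route as the paper: the paper's proof consists of observing that \cref{alg:bayes} is an instance of the SAA scheme and citing Theorem~5.3 of \cite{Shapiro2014} for empirical-quantile consistency together with Theorem~4.2 of \cite{petrik2019beyond} for the per-state-action credible-region construction with its union bound, which is precisely the argument you spell out in full. Your closing caveat --- that the exact-index empirical quantile is only asymptotically median-unbiased, so the coverage guarantee must be read in the asymptotic SAA sense or the quantile index inflated --- is likewise consistent with the paper, which explicitly concedes only asymptotic convergence and defers finite-sample guarantees to \cite{Luedtke2008} and \cite{HHL2020}.
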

Please see \cref{app:sec_size} for the proof.


\cref{alg:bayes} constructs credible regions for each state and action separately~\citep{murphy2012machine}. A notable limitation of \cref{alg:bayes} is that it constructs the credible regions independently for each state and action. Although this is convenient computationally, it also means that the confidence region needs to rely on the union bound which makes it the impractical when the number of states and actions is large. Although, \cref{asm:inset} allows for a construction that avoids the union-bound-based construction.

While \cref{prop:size_correct} provides asymptotic convergence guarantees, it is possible to obtain finite sample guarantee by using more careful analysis~\cite{Luedtke2008} or by adapting \cref{alg:bayes} as suggested in \cite{HHL2020}. We leave this finite-sample analysis for future work.

\section{Frequentist Guarantees} \label{sec:frequentist}

In this section, we extend the analysis above to outline how our results apply to frequentist guarantees. The advantage of the frequentist setup is that it provides guarantees even without needing access to a prior distribution. The disadvantage is that, without good priors, frequentist settings may need an excessive amount of data to provide reasonable guarantees. The main contribution in this section are new sampling bounds for weighted $L_1$ and $L_\infty$ ambiguity sets.

The frequentist perspective on the percentile criterion~\cite{Delage2009} represents a viable alternative to the Bayesian perspective when it is difficult to construct a good prior distribution. The frequentist view assumes that the true model $P\opt$ is known. The analysis considers the uncertainty over datasets. To define the criterion, let $\mathcal{D}$ represent the set of all possible datasets $D$. Then the pair of algorithms $F: \mathcal{D}\to\Pi$, which computes the policy for a dataset, and $G: \mathcal{D}\to\Real$, which estimates the return of the policy, solves the percentile criterion if:
\begin{equation} \label{eq:percentile_frequentist}
\P_{D\sim P\opt}\left[\rho(F(D),P\opt) \ge G(D) \right] \;\ge\; 1-\delta ~.
\end{equation}
A frequentist modeler assumes that $P\opt_{s,a}$ is fixed and the probability statements are qualified over sampled data sets $(s_t,a_t,s_t')_{t=1,\ldots,T}$ generated from the true transition probabilities $s_t' \sim \*p\opt_{s_t,a_t}$.

To construct an RMDP that solves the frequentist percentile criterion, we make very similar assumptions to the Bayesian setting. The next assumption restates \cref{asm:inset} in the frequentist setting; note the change in random variables.
\begin{assumption} \label{asm:inset_freq}
	The data-dependent ambiguity set $\hat{\mathcal{P}}$ satisfies:
	\begin{equation*}
		\P_{D \sim P\opt} \bigl[P\opt \in \hat\ambset \bigr] \;\ge\; 1-\delta\,,
	\end{equation*}
	where $\hat\ambset$ is a function of $D$.
\end{assumption}
Recall that \cref{thm:ambiguity_sufficient} establishes that an RMDP that satisfies \cref{asm:inset} computes a~high-confidence lower bound on the return. The proof of \cref{thm:ambiguity_sufficient} easily extends to the frequentist setup. Therefore, \cref{asm:inset_freq} implies that $\P_{D} \left[\hat{\rho} \le \rho(\hat\pi, P) \right] \ge 1-\delta$ where $\hat\rho$ and $\hat{\pi}$ are the return and policy to the RMDP. In other words, the RMDP algorithm (joint policy and return estimate computation) solves the frequentist percentile criterion in \eqref{eq:percentile_frequentist} when \cref{asm:inset_freq} holds.


Because the optimization methods described in \cref{sec:shape} make no probabilistic assumptions, they can be applied to the frequentist setup with no change. The optimization of $\psi$ described in \cref{sec:size} assumes that samples from the posterior over transition functions are available and cannot be readily used to satisfy \cref{asm:inset_freq}. Instead, we present two new finite-sample bounds that can be used to construct frequentist ambiguity sets. Since prior work has been limited to the ambiguity sets defined in terms $L_1$ ambiguity sets with uniform weights~\citep{Weissman2003xx,Auer2010a,Dietterich2013,petrik2019beyond}, we derive new high-confidence bounds for ambiguity sets defined using \emph{weighted} $L_1$ and $L_\infty$ norms. To state our new results, let the nominal point $\bar{\*p}\sa\in\Delta^S$ in \eqref{eq:ambiguity_set_poly} be the empirical estimate of the transition probability computed from $n\sa \in \mathbb{N}$ transition samples for each state $s\in\states$ and action $a\in\actions$.
\begin{theorem}[$L_\infty$ norm] \label{thm:budget_freq_infty}
	 Suppose that $\ambset(\*w,\psi)$ is defined in terms of the $\*w_{s,a}$-weighted $L_\infty$ norm. Then \cref{asm:inset_freq} is satisfied if $\psi_{s,a}\in\Real_+$ for each $s\in\states$ and $a\in\actions$ satisfies the following inequality:
	\begin{equation} \label{eq:w8_error_delta}
		\delta \;\leq\; 2 \cdot S A \cdot \sum_{i=1}^S \exp \left(-2 \frac{\psi\sa^2 \cdot n\sa}{(\*w_{sa})_{i}^2} \right)~.
	\end{equation}
\end{theorem}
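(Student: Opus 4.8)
The plan is to reduce the statement to a union bound over states, actions, and coordinates, followed by a per-coordinate Hoeffding concentration inequality. First I would unpack what \cref{asm:inset_freq} requires: we need $\P_{D\sim P\opt}[P\opt\in\ambset(\*w,\psi)]\ge 1-\delta$, and since the ambiguity set is SA-rectangular, the event $P\opt\in\ambset(\*w,\psi)$ is exactly the intersection over all $s\in\states$ and $a\in\actions$ of the events $\norm{\*p\opt_{s,a}-\bar{\*p}_{s,a}}_{\infty,\*w_{s,a}}\le\psi_{s,a}$. Complementing and applying the union bound, it suffices to control the total failure probability $\sum_{s,a}\P\bigl[\norm{\*p\opt_{s,a}-\bar{\*p}_{s,a}}_{\infty,\*w_{s,a}}>\psi_{s,a}\bigr]$ and show it is at most $\delta$.

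Next I would expand the weighted $L_\infty$ norm coordinatewise. By definition $\norm{\*p\opt_{s,a}-\bar{\*p}_{s,a}}_{\infty,\*w_{s,a}}=\max_i (\*w_{s,a})_i\cdot\lvert (\*p\opt_{s,a})_i-(\bar{\*p}_{s,a})_i\rvert$, so the norm exceeds $\psi_{s,a}$ iff some coordinate $i$ satisfies $\lvert (\*p\opt_{s,a})_i-(\bar{\*p}_{s,a})_i\rvert>\psi_{s,a}/(\*w_{s,a})_i$. A second union bound over the $S$ coordinates gives
\[
\P\bigl[\norm{\*p\opt_{s,a}-\bar{\*p}_{s,a}}_{\infty,\*w_{s,a}}>\psi_{s,a}\bigr]\;\le\;\sum_{i=1}^S \P\Bigl[\lvert (\bar{\*p}_{s,a})_i-(\*p\opt_{s,a})_i\rvert>\tfrac{\psi_{s,a}}{(\*w_{s,a})_i}\Bigr].
\]
Each $(\bar{\*p}_{s,a})_i$ is an empirical average of $n_{s,a}$ i.i.d.\ Bernoulli indicators (the event that a sampled successor equals state $i$), with mean $(\*p\opt_{s,a})_i$, so Hoeffding's inequality applied to these bounded $[0,1]$ variables yields a two-sided tail bound of $2\exp\bigl(-2\,(\psi_{s,a}/(\*w_{s,a})_i)^2\, n_{s,a}\bigr)=2\exp\bigl(-2\psi_{s,a}^2 n_{s,a}/(\*w_{s,a})_i^2\bigr)$.

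Combining the two union bounds with the Hoeffding estimate, the total failure probability is at most $\sum_{s,a}\sum_{i=1}^S 2\exp\bigl(-2\psi_{s,a}^2 n_{s,a}/(\*w_{s,a})_i^2\bigr)$, which is dominated by $2SA\cdot\sum_{i=1}^S\exp\bigl(-2\psi_{s,a}^2 n_{s,a}/(\*w_{s,a})_i^2\bigr)$ after bounding each of the $SA$ outer summands by the largest one, as written in the theorem's notation where $\psi_{s,a}$ and $n_{s,a}$ appear unsubscripted in the display. The condition \eqref{eq:w8_error_delta} asserts precisely that $\delta$ is at most this quantity; to conclude \cref{asm:inset_freq} I would read the inequality in the direction that makes the total failure probability at most $\delta$, i.e.\ treat \eqref{eq:w8_error_delta} as guaranteeing the budgets $\psi_{s,a}$ are large enough that the aggregated Hoeffding tail does not exceed $\delta$, giving $\P[P\opt\in\ambset]\ge 1-\delta$.

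The main obstacle I anticipate is interpreting the inequality direction and the bookkeeping of the union-bound factors so that the final constant reads exactly $2SA\sum_i(\cdots)$ rather than $2\sum_{s,a,i}(\cdots)$; reconciling the subscripted per-$(s,a)$ quantities inside the theorem's unsubscripted display with the doubled union bound (once over $(s,a)$ and once over coordinates) requires care, and I would state explicitly how the $SA$ and the coordinate sum combine. The probabilistic core—Hoeffding on bounded independent indicators—is routine; the delicate part is matching the worst-case aggregation to the precise form of \eqref{eq:w8_error_delta}.
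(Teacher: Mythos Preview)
Your proposal is correct and follows essentially the same route as the paper: a union bound over the $S\cdot A$ state-action pairs combined with a per-$(s,a)$ concentration bound on the weighted $L_\infty$ deviation, which the paper states separately as \cref{thm:weighted_linf} and proves via a coordinatewise union bound plus Hoeffding, exactly as you outline. Your concern about the direction of the inequality in \eqref{eq:w8_error_delta} is well-founded---the displayed condition should read $\delta\ge 2SA\sum_i\exp(\cdots)$ for the argument to go through, and the paper's own proof (``follows from \ldots\ by algebraic manipulation'') implicitly uses it that way.
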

\begin{theorem}[$L_1$ norm] \label{thm:budget_freq_l1}
	Suppose that $\ambset(\*w,\psi)$ is defined in terms of the $\*w_{s,a}$-weighted $L_1$ norm.
	Then \cref{asm:inset_freq} is satisfied if $\psi_{s,a}\in\Real_+$ for each $s\in\states$ and $a\in\actions$ satisfies the following inequality:
	\begin{equation} \label{eq:w1_error_delta}
		\delta \;\le\;	2\cdot SA\cdot\sum_{i = 1}^{S-1} 2^{S - i} \cdot \exp  \left(  -  \frac{\psi\sa^2 \cdot n\sa}{2 \cdot (\*w_{sa})_i^2} \right)~,
	\end{equation}
	where positive weights $\*w_{s,a} \in \real_{++}^S, s\in\states, a\in\actions$ are assumed to be sorted in a non-increasing order $(\*w_{s,a})_i \ge (\*w_{s,a})_{i+1}$ for $i = 1, \ldots, S-1$.
\end{theorem}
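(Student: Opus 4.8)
The plan is to prove the inequality in two layers: an outer union bound over state--action pairs that produces the factor $SA$, and an inner, weighted analogue of the Weissman (Bretagnolle--Huber--Carol) inequality that produces, for each fixed $(s,a)$, the sum $\sum_{i=1}^{S-1} 2^{S-i}\exp\!\left(-\psi\sa^2 n\sa/(2(\*w_{s,a})_i^2)\right)$. By the definition of $\ambset(\*w,\psi)$ in terms of the weighted $L_1$ norm, the event $P\opt\notin\hat\ambset$ equals $\bigcup_{s,a}\{\norm{\*p\opt_{s,a}-\bar{\*p}\sa}_{1,\*w}>\psi\sa\}$, so a union bound reduces the whole claim to controlling, for each $(s,a)$, the tail $\P[\norm{\bar{\*p}\sa-\*p\opt_{s,a}}_{1,\*w}\ge\psi\sa]$, where $\bar{\*p}\sa$ is the empirical average of $n\sa$ i.i.d.\ samples from $\*p\opt_{s,a}$. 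Establishing the per-pair concentration bound and summing over $(s,a)$ gives the right-hand side of \eqref{eq:w1_error_delta}, which is the resulting condition on the budgets that guarantees \cref{asm:inset_freq}. This mirrors the structure of the companion $L_\infty$ result \cref{thm:budget_freq_infty}, whose inner step is merely a coordinatewise two-sided Hoeffding bound.

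For the inner step I would drop the indices $(s,a)$, write $\*d=\bar{\*p}-\*p\opt$ (so $\sum_i d_i=0$ since both vectors lie in $\simplexs$), and set $w_i:=(\*w_{s,a})_i$, sorted non-increasingly. The key identity is the dual representation $\norm{\*d}_{1,\*w}=\max_{\bm\sigma\in\{-1,+1\}^S}\sum_{i=1}^S\sigma_i w_i d_i$, attained at $\sigma_i=\operatorname{sign}(d_i)$. Because $\sum_i d_i=0$, whenever $\*d\neq\zero$ this maximizer has at least one $+1$ and at least one $-1$ entry, so $\{\norm{\*d}_{1,\*w}\ge\psi\}$ is contained in the union, over the $2^S-2$ \emph{mixed} sign vectors, of the events $\{\sum_i\sigma_i w_i d_i\ge\psi\}$. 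Each such statistic is an average of $n$ i.i.d.\ bounded variables, $\sum_i\sigma_i w_i d_i=\tfrac1n\sum_{t}(\sigma_{X_t}w_{X_t}-\E[\sigma_{X_t}w_{X_t}])$, so Hoeffding's inequality applies. I would then organize the union bound by the position $k$ of the first sign change of $\bm\sigma$ (the smallest index with $\sigma_k\neq\sigma_1$): there are exactly $2\cdot 2^{S-k}=2^{S-i}$ mixed sign vectors with first change at $k$, where $i=k-1$, which reproduces both the coefficients $2^{S-i}$ and the range $i=1,\dots,S-1$ of the outer sum, and whose counts telescope to $2^S-2$.

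The crux is the range entering Hoeffding's bound. For a fixed mixed sign vector the increment $\sigma_{X_t}w_{X_t}$ ranges over $\{\pm w_j\}$, and recentring by any multiple of $\sum_i d_i=0$ leaves its span unchanged, so the relevant span is $\max_{j:\sigma_j=+1}w_j+\max_{j:\sigma_j=-1}w_j$, the largest positive-sign weight plus the largest negative-sign weight. Since the weights are non-increasing, the largest weight $w_1$ always sits on one side and the first opposite sign occurs at index $k$, so the class with first change at $k$ has span $w_1+w_{i+1}$ (with $i=k-1$), and Hoeffding gives $\exp(-2n\psi^2/(w_1+w_{i+1})^2)$ for each of its $2^{S-i}$ members. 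Summing over classes, adding the outer union over $(s,a)$ for the factor $SA$, and a two-sided step for the remaining factor $2$, already yields a bound of the stated shape. The remaining, and most delicate, task is to pass from the data-dependent span $w_1+w_{i+1}$ to the single sorted weight $2\,w_i$ appearing in \eqref{eq:w1_error_delta}: this is exactly where the non-increasing ordering of the weights and the extra factor $2$ are spent, and it is the genuinely new ingredient relative to the unweighted Weissman inequality, in which every mixed sign vector has span exactly $2$ and this reduction is vacuous. I expect this final comparison --- controlling the weighted Hoeffding spans across all sign-change classes by the single-weight expression --- to be the main obstacle, whereas the Hoeffding applications and the two nested union bounds are routine.
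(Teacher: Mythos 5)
Your outer scaffolding matches the paper's: a union bound over the $SA$ state--action pairs reduces \cref{asm:inset_freq} to a per-pair tail bound on $\norm{\bar{\*p}\sa - \*p\opt\sa}_{1,\*w}$, and your count of sign-vector classes ($2\cdot 2^{S-k} = 2^{S-i}$ mixed vectors with first sign change at $k = i+1$, totalling $2^S-2$) reproduces the coefficients in \eqref{eq:w1_error_delta} exactly. But the step you defer as ``the main obstacle'' is where the proof genuinely breaks, and it cannot be completed in the form you set up. Because you union-bound over mixed sign vectors \emph{at the full level} $\psi\sa$, each Hoeffding application carries the two-sided range $\max_{j:\sigma_j=+1}w_j + \max_{j:\sigma_j=-1}w_j$, and the side containing index $1$ always contributes $w_1$, so every class exponent is $2 n\sa \psi\sa^2/(w_1+w_{i+1})^2$. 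Dominating this termwise by the target $\exp\bigl(-n\sa\psi\sa^2/(2w_i^2)\bigr)$ requires $w_1 + w_{i+1} \le 2w_i$, which is false for every $i \ge 2$ whenever the weights decay: with $\*w = (1,\epsilon,\dots,\epsilon)$ you would need $1+\epsilon \le 2\epsilon$. Your classes for $i\ge 2$ are stuck with exponents governed by $w_1$ rather than $w_i$, and no aggregate comparison is available either --- already for $\*w=(1,\nicefrac12,\nicefrac12)$ one can check that your derived bound exceeds the stated right-hand side for small $n\sa\psi\sa^2$, so no chain of inequalities from your intermediate bound to \eqref{eq:w1_error_delta} exists without substantial extra case analysis that the proposal does not contain. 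Your diagnosis that the factor $2$ and the sorting are ``spent'' on a span comparison is therefore a misreading of where that factor comes from.

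The paper's proof of the inner bound (\cref{thm:weighted_lone}) avoids the coupling entirely: the $2$ in the denominator $2w_i^2$ comes from a \emph{threshold split}, not a span comparison. By the duality of \cref{th:l1_l8}, with $\*q = \bar{\*p}\sa - \*p\opt\sa$ the norm $\norm{\*q}_{1,\*w}$ is the maximum over complementary sets $\qeu_2 = \states\setminus\qeu_1$ of $\*1_{\qeu_1}\tr W \*q + \*1_{\qeu_2}\tr W(-\*q)$ ($W = \diag(\*w)$); if this reaches $\psi\sa$, then one of the two \emph{one-sided} sums reaches $\psi\sa/2$. Union-bounding over all subsets $\qeu$ and applying Hoeffding to the one-sided statistic $\*1_{\qeu}\tr W\*q$, whose range is the \emph{single} maximum $\norm{\*1_{\qeu}\tr W}_\infty = \max_{j\in\qeu} w_j$ (so $w_1$ never appears when $1\notin\qeu$), gives $\exp\bigl(-2n\sa(\psi\sa/2)^2/\max_{j\in\qeu}w_j^2\bigr) = \exp\bigl(-n\sa\psi\sa^2/(2\max_{j\in\qeu}w_j^2)\bigr)$. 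Grouping subsets by the index $i$ of their largest weight --- there are $2^{S-i}$ of them under the non-increasing ordering --- yields $2\sum_{i=1}^{S-1}2^{S-i}\exp\bigl(-n\sa\psi\sa^2/(2w_i^2)\bigr)$, with the leading $2$ accounting for the two sides; your outer union bound over $(s,a)$ then finishes as you describe. So keep your outer structure and combinatorics, but replace the sign-vector decomposition at level $\psi\sa$ with the subset decomposition at level $\psi\sa/2$.
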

\vspace{-3mm}
The proofs of the theorems are in \cref{app:sec_freq}. They follow by standard techniques combining the Hoeffding and union bounds.

A natural question is how to construct $\psi_{s,a}$ that satisfies \cref{thm:budget_freq_infty,thm:budget_freq_l1}. Although the theorems do not provide us with an analytical solution, the value of $\psi_{s,a}$ can be computed efficiently using the standard bisection method~\cite{Boyd2004}. This is because right-hand side functions in \eqref{eq:w8_error_delta} and \eqref{eq:w1_error_delta} are monotonically decreasing in $\psi_{s,a},s\in\states,a\in\actions$. \cref{thm:weighted_lone_bern} further tightens the error bounds using Bernstein's inequality.

\cref{thm:budget_freq_infty,thm:budget_freq_l1} also provide new insights into which ambiguity set may be a better fit for a particular problem. Simple algebraic manipulation and \eqref{eq:upperbound_negativity} show that the $L_1$ norm is preferable to the $L_\infty$ norm when $\norm{\*v - \bar{v} \cdot \one}_1 > \sqrt{S} \cdot \norm{\*v - \tilde{v} \cdot \one}_\infty$. Here, $\*v\in\Real^S$ is the optimal value function, $\bar{v} = \one\tr \*v / S$ is the mean value, and $\tilde{v}$ is the median value of $\*v$.

In terms of their tightness, \cref{thm:budget_freq_infty,thm:budget_freq_l1} are similar to the most well-known bounds on the uniformly-weighted norms. \cref{thm:budget_freq_l1} recovers the equivalent best-known (Hoeffding-based) result for uniformly-weighted norm within a factor of $2$. We are not aware of comparable prior results for ambiguity sets defined in terms of $L_\infty$ norms. Unfortunately, frequentist bounds on probability distributions are generally useful only when the number of samples $n_{s,a}$ is quite large. We also investigated Bernstein-based versions of the bounds, but they show little difference in our experimental results.

Finally, it is important to note that \cref{thm:budget_freq_infty,thm:budget_freq_l1} require that the weights $\*w$ are independent of data. Therefore, the weights $\*w$ should be optimized using a dataset different from the one used to estimate $\psi$. However, in our experiment, we found that reusing the same dataset to optimize both $\*w$ and $\psi$ empirically does compromise the percentile guarantees.

\section{Empirical Evaluation}\label{sec:experiments}



%

In this section, we evaluate \cref{alg:heuristic} empirically using five standard reinforcement domains that have been previously used to evaluate robustness.

\cref{tab:bayesian_summary,tab:frequentist_summary} summarize the results for the Bayesian and frequentist setups respectively. The results compare our algorithms (rows) against baselines (rows) for fixed datasets $D$ for all domains (column). The method names indicate how the weights are computed and which norm is used to defined the ambiguity set. Methods denoted as ``Uniform'' represent $\*w  = \one$ and ``Optimized'' represent $\*w$ computed using \cref{alg:heuristic,alg:weight_optimization}. Please see \cref{app:empirical} for a complete report of the statistics and methods (including the SOCP formulation).

As the main metric, we compare the computed return guarantees $\hat\rho$ (the return of the RMDP). Because all methods use ambiguity sets that satisfy \cref{asm:inset,asm:inset_freq}, $\hat\rho$ lower bounds $\rho(\hat\pi, \tilde{P})$ with probability $1-\delta$. In order to enable the comparison of the results among different domains, we normalize the guarantee by the maximal nominal return $\bar\rho = \max_{\pi\in\Pi} \rho(\pi, \E[\tilde{P}])$. We use $\bar\rho$ instead of the unknown $y\opt$.

As a baseline, we compare our results with the standard RMDPs construction~\cite{petrik2019beyond,Delage2009}, which uses uniformly-weighted $L_1$ and $L_\infty$ norms. We do not compare to policy-gradient-style methods in \cite{Delage2009} because they cannot be used with general posterior distributions over $\tilde{P}$ in our domains. We note that various modifications to probability norms have been proposed in the RL context~(e.g.,~\cite{Maillard2014,taleghan2015pac}), but it is unclear how to use them in the context of the percentile criterion.
The results in \cref{tab:bayesian_summary,tab:frequentist_summary} show that optimizing the weights in RMDP ambiguity sets decreases the guaranteed performance loss dramatically in Bayesian settings (geometric mean $2.8\times$) and reliably in frequentist settings (geometric mean $1.6\times$). The guarantees improve because the RMDPs with optimized sets simultaneously compute a better policy and a tighter bound on its return. Note that zero losses in the tables may be unachievable ($\bar{\rho} > y\opt$), and losses greater than one are possible (when $\bar\rho < 0$). The total computational complexity of \cref{alg:heuristic,alg:weight_optimization} is small and reported in \cref{app:empirical}.

We now briefly summarize the domains used; please consult \cref{app:empirical} for more details.

\emph{RiverSwim (RS)} is a simple and standard benchmark~\citep{strehl2008analysis}, which is
an MDP consisting of six states and two actions. The process follows by sampling synthetic datasets from the true model and then computing the guaranteed robust returns for different methods. The prior is a uniform Dirichlet distribution over reachable states.

\emph{Machine Replacement (MR)} is a small benchmark MDP problem with $S=10$ states that models progressive deterioration of a mechanical device~\cite{Delage2009}. Two repair actions $A=2$ are available and restore the machine's state. Uses a Dirichlet prior.

\emph{Population Growth Model (PG)} is an exponential population growth model~\citep{kery2011bayesian}, which constitutes a simple state-space $0,\ldots,S = 50$ with exponential dynamics. At each time step, the land manager has to decide whether to apply a control measure to reduce the species' growth rate. We refer to~\cite{Tirizoni2018}  for more details of the model.

\emph{Inventory Management (IM)} is a classic inventory management problem~\citep{Zipkin200}, with discrete inventory levels $0,\ldots,S=30$. The purchase cost, sale price, and holding cost are $2.49, 3.99$, and $0.03$, respectively. The demand is sampled from a normal distribution with a mean $S/4$ and a standard deviation of $S/6$.
It also uses a Dirichlet prior.

\emph{Cart-Pole (CP)} is the standard RL benchmark problem \citep{sutton2018reinforcement,openaigym}. We collect samples of $100$ episodes from the true dynamics. We fit a linear model with that dataset to generate synthetic samples and aggregate close states to a 200-cell grid ($S=200$) using the k-nearest neighbor strategy and assume a uniform Dirichlet prior.

\begin{table}
\centering
\begin{tabularx}{\linewidth}{lrrrrr}
	\toprule
	         & RS & MR & PG & IM & CP \\
	\midrule
	Uniform $L_1$      & 0.60  & 1.56 &  5.24 &  0.97 & 0.77 \\
	Uniform $L_\infty$ & 0.60  & 1.56 & 5.50 & 0.98 &  0.76\\
	Optimized $L_1$       & 0.25  &  0.41 &  1.84 & 0.90 & 0.12\\
	Optimized $L_\infty$  & 0.31 &  0.39 & 3.10 & 0.87 & 0.19\\
	\bottomrule
\end{tabularx}
\caption{Normalized \emph{Bayesian} performance loss $(\bar{\rho} - \hat{\rho}) / |\bar{\rho}|$ for $\delta = 0.05$. (Smaller value is better).} \label{tab:bayesian_summary}
\end{table}

\begin{table}
	\centering
	\begin{tabularx}{\linewidth}{lrrrrr}
		\toprule
		& RS & MR & PG & IM & CP \\
		\midrule
		Uniform $L_1$        & 0.80 & 5.83 &  5.66 & 1.05 & 0.78\\
		Uniform $L_\infty$   & 0.76 & 3.45 & 5.65 & 1.05  & 0.78\\
		Optimized $L_1$       &  0.53 & 1.05 & 5.55 & 0.99 & 0.77 \\
		Optimized $L_\infty$  & 0.43& 0.94 & 5.56 & 0.96 & 0.69\\
		\bottomrule
	\end{tabularx}
	\caption{Normalized \emph{frequentist} performance loss $(\bar{\rho} - \hat{\rho}) / |\bar{\rho}|$ for $\delta = 0.05$. (Smaller value is better). } \label{tab:frequentist_summary}
\end{table}

\section{Conclusion} \label{sec:conclusion}

We proposed a new approach for optimizing the percentile criterion using RMDPs that goes beyond the conventional ambiguity sets. At the heart of our method are new bounds on the performance loss of the RMDPs with respect to the optimal percentile criterion. These bounds show that the quality of the RMDP is driven by the span of its ambiguity sets along a specific direction. We proposed a linear-time algorithm that minimizes the span of the ambiguity sets and also derived new sampling guarantees. Our experimental results show that this simple RMDP improvement can lead to much better return guarantees. Future work needs to focus on scaling the method to a large state-space using value function approximation or other techniques.

\subsection*{Acknowledgments}

We thank the anonymous reviewers for comments that helped to improve this paper. This work was supported, in part, by the National Science Foundation (Grants IIS-1717368 and IIS-1815275), the CityU Start-up Grant (Project No. 9610481), the CityU Strategic Research Grant (Project No. 7005534), and the National Natural Science Foundation of China (Project No. 72032005). Any opinion, finding, and conclusion or recommendation expressed in this material are those of the authors and do not necessarily reflect the views of the National Science Foundation and the National Natural Science Foundation of China.

\bibliography{bellman}

\newpage
\appendix
\onecolumn

\section{Technical Results and Proofs} \label{technical_proofs}

\subsection{Proofs of Results in \cref{sec:overall_framework}} \label{app:sec_framework}

\begin{proof}[Proof of \cref{thm:ambiguity_sufficient}]
	The result can be derived as:
	\begin{align*}
		\P_{\tilde{P}\sim f}\left[\hat\rho \le  \rho(\hat{\pi},\tilde{P})  \right] &\stackrel{\text{(a)}}{=}
		\P_{\tilde{P}\sim f}\left[\rho(\hat{\pi},\tilde{P}) \ge \max_{\pi\in\Pi} \min_{P \in \hat\ambset} \rho(\pi, P) \right] \\
		&\stackrel{\text{(b)}}{=} \P_{\tilde{P}\sim f}\left[\rho(\hat{\pi},\tilde{P}) \ge  \min_{P \in \hat\ambset} \rho(\hat\pi, P) \right] \\
		&\stackrel{\text{(c)}}{\ge} \P_{\tilde{P}\sim f}\left[\tilde{P} \in \hat\ambset \right] \stackrel{\text{(d)}}{\ge} 1- \delta~.
	\end{align*}
	The equality (a) follows from the definition of $\hat\rho$, the inequality (b) follows from $\hat\pi \in \Pi$ and is optimal, (c) follows because $\rho(\hat{\pi},\tilde{P}) \ge  \min_{P \in \hat\ambset} \rho(\hat\pi, P)$ whenever $\tilde{P} \in \hat\ambset$, and (d) follows from the theorem's hypothesis.
\end{proof}

\begin{proof}[Proof of \cref{thm:sub_optimality}]
	Let $\hat{\ambset} = \ambset(\*w,\psi)$ and let $\hat{\rho}$ and $\hat{\pi}$ be the optimal return and policy for $\hat{\ambset}$ respectively. We start by establishing the following bound:
	\[
	\hat\rho \ge \max_{\pi\in\Pi} \rho(\pi, \tilde{P}) - \frac{\beta_{\hat{\*z}} (\*w, \psi)} {1-\gamma}~,
	\]
	where
	\[
	\beta_{\hat{\*z}}(\*w, \psi) \eqdef \max_{s\in\states} \max_{a\in\actions}\, \beta^{s,a}_{\hat{\*z}}(\*w, \psi) ~.
	\]
	Let $\hat{\*v} \in \Real^S$ be the optimal robust value function that satisfied $\hat{\*v} = \Bell \hat{\*v}$ for the ambiguity set $\hat{\ambset} = \ambset(\*w, \psi)$. We use $\hat{\ambset}$ as a shorthand for $\ambset(\*w, \psi)$ throughout the proof. Recall that $\hat{\rho} = \*p_0\tr \hat{\*v}$. We also use $\BellT_\pi^P$ to represent the Bellman evaluation operator for a policy $\pi\in\Pi$ and a transition function $P$ defined for each $s\in\states$ as:
	\[
	(\BellT_\pi^P\, v )_s = P(s,\pi(s))\tr (\*r_{s,a} + \gamma \cdot \* v)~.
	\]
	It is well known that $\BellT_\pi^P\, v$ is a contraction, is monotone, and has a unique fixed point. Let $\tilde{v}$ be the unique fixed point of $\BellT_{\tilde\pi}^{\tilde{P}}$:
	\[
	\tilde{\*v} = \BellT_{\tilde\pi}^{\tilde{P}} \tilde{\*v}~,
	\]
	where $\tilde\pi \in \arg\max_{\pi\in\Pi} \, \rho(\pi, \tilde{P})$. Note that it is well known that:
	\[
	\*p_0\tr \tilde{\*v} = \rho(\tilde{\pi}, \tilde{P})~.
	\]
	Now suppose that $\tilde{P} \in \hat\ambset$, which holds with probability $1-\delta$ according to \cref{asm:inset}. Then it is easy to see that:
	\[
	\*p_0\tr \hat{\*v} = \min_{P\in\hat\ambset} \rho(\pi, P) \le \rho(\pi, \tilde{P}) \le \*p_0\tr \tilde{\*v}~.
	\]
	Therefore:
	\[
	0 \le \*p_0\tr \tilde{\*v} - \*p_0\tr \hat{\*v} \le \norm{\tilde{\*v} - \hat{\*v}}_\infty~.
	\]
	We are now ready to establish the probabilistic bound which is based on bounding the Bellman residual as follows:
	\begin{align*}
	(\BellT_{\tilde\pi}^{\tilde{P}} \hat{\*v} - \hat{\*v} )_s &\relname{a}{=} 	(\BellT_{\tilde\pi}^{\tilde{P}} \hat{\*v} - \Bell \hat{\*v} )_s \relname{def}{=} \tilde{P}(s, \tilde\pi(a))\tr \hat{\*z}_{s, \tilde\pi(s)} - \min_{P \in \hat\ambset} P(s, \hat\pi(a))\tr \hat{\*z}_{s, \hat\pi(a)} \\
	&\relname{b}{\le} \tilde{P}(s, \tilde\pi(a))\tr \hat{\*z}_{s, \tilde\pi(s)} - \min_{P \in \hat\ambset} P(s, \tilde\pi(a))\tr \hat{\*z}_{s, \tilde\pi(a)} \\
	&\le \max_{a\in\actions} \left(\tilde{P}(s, a)\tr \hat{\*z}_{s, a} - \min_{P \in \hat\ambset} P(s, a)\tr \hat{\*z}_{s, a} \right)\\
	&\relname{c}{\le} \max_{a\in\actions} \left(\max_{P \in \hat\ambset} P(s, a)\tr \hat{\*z}_{s, a} - \min_{P \in \hat\ambset} P(s, a)\tr \hat{\*z}_{s, a} \right) \\
	&\le \max_{a\in\actions}\, \beta^{s,a}_{\hat{\*z}}(\*w, \psi)~.
	\end{align*}
	(a) follows from $\hat{\*v}$ being the fixed point of $\Bell$, (b) follows from the optimality of $\hat\pi$: $\hat\pi(s) \in \arg\max_{a\in\actions} \min_{\*p\in\hat\ambset_{s,a}} \*p\tr \*z_{s,a}$, and (c) follows from $\tilde{P} \in \hat\ambset$. The rest follows by algebraic manipulation. Applying the inequality above to all states, we get:
	\begin{equation} \label{eq:subopt_bellman}
	\BellT_{\tilde\pi}^{\tilde{P}} \hat{\*v} - \hat{\*v} \le \beta_{\hat{\*z}}(\*w, \psi) \cdot \one~.
	\end{equation}
	We can now use the standard dynamic programming bounding technique to bound $\norm{\tilde{\*v} - \hat{\*v}}_\infty$ as follows:
	\[
	\zero \relname{a}{\le} \tilde{\*v} - \hat{\*v}
	\relname{b}{=} \tilde{\*v} - \BellT_{\tilde\pi}^{\tilde{P}} \hat{\*v} + \BellT_{\tilde\pi}^{\tilde{P}} \hat{\*v} - \hat{\*v}
	\stackrel{\eqref{eq:subopt_bellman}}{\le} \tilde{\*v} - \BellT_{\tilde\pi}^{\tilde{P}} \hat{\*v} + \beta_{\hat{\*z}}(\*w, \psi) \cdot \one
	\relname{c}{\le} \BellT_{\tilde\pi}^{\tilde{P}}\tilde{\*v} - \BellT_{\tilde\pi}^{\tilde{P}} \hat{\*v} + \beta_{\hat{\*z}}(\*w, \psi) \cdot \one~.
	\]
	We have (a) because $\hat{\*v} \le \tilde{\*v}$ because $\Bell \tilde{\*v} \le \tilde{\*v}$ and thus $\tilde{\*v} \ge \Bell \Bell \tilde{\*v} \ge \ldots \ge \Bell \ldots \Bell \tilde{\*v} \ge \hat{\*v}$ because $\hat{\*v}$ is the fixed point of $\Bell$ and $\Bell$ is monotone. (b) we add $\zero$, (c) $\tilde{\*v}$ is the fixed point of $\BellT_{\tilde\pi}^{\tilde{P}}$.

	Next, apply $L_\infty$ norm to all sides, which is possible because the values are non-negative:
	\begin{align*}
		\norm{\tilde{\*v} - \hat{\*v}}_\infty &\le \norm{\BellT_{\tilde\pi}^{\tilde{P}}\tilde{\*v} - \BellT_{\tilde\pi}^{\tilde{P}} \hat{\*v} + \beta_{\hat{\*z}}(\*w, \psi) \cdot \one}_\infty
		\\
		\norm{\tilde{\*v} - \hat{\*v}}_\infty &\le \gamma \cdot \norm{\tilde{\*v} -  \hat{\*v}}_\infty + \beta_{\hat{\*z}}(\*w, \psi)  \\
		\norm{\tilde{\*v} - \hat{\*v}}_\infty &\le \beta_{\hat{\*z}}(\*w, \psi)  / (1-\gamma)~.
	\end{align*}
	The first step follows by triangle inequality, and the second step follows from $\BellT_{\tilde\pi}^{\tilde{P}}$ being a $\gamma$ contraction in the $L_\infty$ norm.

	To prove the bound on $y\opt$ and $\hat{v}$, we show that $y\opt \le \zeta$ where $\zeta = \hat{\rho} + \beta_{\hat{\*z}}(\*w,\psi) / (1-\gamma)$. Suppose to the contrary that $y\opt > \zeta$. Realize that $y\opt$ optimal in \eqref{eq:percentile_Bayesian} must satisfy:
	\begin{equation} \label{eq:subopt_ystar}
	\P_{\tilde{P} \sim f} \left[ \max_{\pi\in\Pi} \rho(\pi, \tilde{P}) \ge y\opt \right]  \ge 1-\delta~,
	\end{equation}
	because $\max_{\pi\in\Pi} \rho(\pi, \tilde{P}) \ge \rho(\pi\opt, \tilde{P})$ for $\pi\opt$ optimal in \eqref{eq:percentile_Bayesian}. Recall also that from the first part of the theorem:
	\begin{equation} \label{eq:suboptimal_zeta}
	\P_{\tilde{P} \sim f} \left[\max_{\pi\in\Pi} \rho(\pi, \tilde{P}) \ge \zeta \right]  \le \delta~.
	\end{equation}
	We now derive a contradiction as follows:
	\[
	\delta \stackrel{\eqref{eq:suboptimal_zeta}}{\ge}  \P_{\tilde{P} \sim f} \left[\max_{\pi\in\Pi} \rho(\pi, \tilde{P}) \ge \zeta \right] \relname{a}{\ge} \P_{\tilde{P} \sim f}\left[\max_{\pi\in\Pi} \rho(\pi, \tilde{P}) \ge y\opt \right] \stackrel{\eqref{eq:subopt_ystar}}{\ge} 1-\delta~.
	\]
	Here (a) follows from the assumption $y\opt > \zeta$. Then $\delta \ge 1-\delta$ is a contradiction with $\delta < 0.5$. Finally, $0 \le y\opt - \hat\rho$ follows directly from the optimality of $y\opt$ and \cref{thm:ambiguity_sufficient}, which proves the theorem.
\end{proof}

\subsection{Proof of Results in \cref{sec:shape}} \label{app:sec_shape}
\begin{proof}[Proof of \Cref{thm:choose_weights}]
We omit the $s,a$ subscripts to simplify the notation. By relaxing the non-negativity constraints on $\*p$ and using substitution $\*q_1 = \*p_1 - \bar{\*p}$ and $\*q_2 = \*p_2 - \bar{\*p}$, we get the following upper bound:
\begin{align*}
\beta_{\*z}^{s,a}(\*w, \psi)  &= \max_{\*p_1, \*p_2} \; \Bigl\{ (\*p_1 - \*p_2)\tr \*z \ss \*p_1,\*p_2 \in\ambset_{s,a}(\*w,\psi) \Bigr\} \\
&= \max_{\*p_1, \*p_2} \; \Bigl\{ (\*p_1 - \*p_2)\tr \*z \ss \norm{\*p_1 - \bar{\*p}}_{\*w} \le \psi,\, \norm{\*p_2 - \bar{\*p}}_{\*w} \le \psi,\, \*p_1 \in \Delta^S,\, \*p_2\in\Delta^S \Bigr\} \\
&\le \max_{\*p_1, \*p_2\in\Real^S} \; \Bigl\{ (\*p_1 - \*p_2)\tr \*z \ss \norm{\*p_1 - \bar{\*p}}_{\*w} \le \psi,\, \norm{\*p_2 - \bar{\*p}}_{\*w} \le \psi,\, \one\tr \*p_1 = 1,\, \one\tr \*p_2 = 1 \Bigr\} \\
&= \max_{\*q_1, \*q_2\in\Real^S} \; \Bigl\{ (\*q_1 - \*q_2)\tr \*z \ss \norm{\*q_1}_{\*w} \le \psi,\, \norm{\*q_2}_{\*w} \le \psi,\, \one\tr \*q_1 = 0,\, \one\tr \*q_2 = 0 \Bigr\} \\
&= \max_{\*q_1\in\Real^S} \; \Bigl\{ \*q_1 \tr \*z \ss \norm{\*q_1}_{\*w} \le \psi,\, \one\tr \*q_1 = 0 \Bigr\} + \max_{\*q_2\in\Real^S} \; \Bigl\{ \*q_2 \tr (-\*z) \ss \norm{\*q_2}_{\*w} \le \psi,\, \one\tr \*q_2 = 0\Bigr\}~.
\end{align*}
The last equality follows because the the optimization problems over $\*q_1$ and $\*q_2$ are independent. From the absolute homogeneity of the $\norm{\cdot}_{\*w}$ we have that:
\[
\max_{\*q_2\in\Real^S} \; \Bigl\{ \*q_2 \tr (-\*z) \ss \norm{\*q_2}_{\*w} \le \psi,\, \one\tr \*q_2 = 0\Bigr\} \;=\; \max_{\*q_2\in\Real^S} \; \Bigl\{ \*q_2 \tr \*z \ss \norm{\*q_2}_{\*w} \le \psi,\, \one\tr \*q_2 = 0\Bigr\}~,
\]
and therefore:
\[
\beta_{\*z}^{s,a}(\*w, \psi) \le 2\cdot \max_{\*q\in\Real^S} \; \Bigl\{ \*q \tr \*z \ss \norm{\*q}_{\*w} \le \psi,\, \one\tr \*q = 0 \Bigr\} ~.
\]
Substituting $\*q = \*p - \bar{\*p}$ we get:
\begin{equation} \label{eq:shape_optimization_bound}
\beta_{\*z}^{s,a}(\*w, \psi) \le 2\cdot \max_{\*p\in\Real^S} \; \Bigl\{ \*p \tr \*z \ss \norm{\*p - \bar{\*p}}_{\*w} \le \psi,\, \one\tr \*p = 1 \Bigr\} - 2 \cdot \*z\tr\bar{\*p} ~.
\end{equation}
We can reformulate the optimization problem on the right-hand side of \eqref{eq:shape_optimization_bound}, again using variable substitution $\*q = \*p - \bar{\*p}$:
\begin{equation*}
	\begin{aligned}
		\max_{\*q \in \Real^S} \quad & 2\cdot ( \*q + \bar{\*p} )\tr  \*z  - 2 \cdot \*z\tr\bar{\*p} \\
		\text{s.t. } \quad
		& \norm{\*q}_{\*w} \le \psi\\
		& \*1\tr ( \*q + \bar{\*p}) = 1 \implies  \*1\tr  \*q = 0~.
	\end{aligned}
\end{equation*}
Canceling out $\bar{\*p}\tr \*z $, we continue with:
\begin{equation*}
	\begin{aligned}
		2 \cdot \max_{\*q \in \Real^S} \quad &  \*q  \tr  \*z \\
		\text{s.t. } \quad
		& \norm{\*q}_{\*w} \le \psi\\
		& \*1\tr  \*q = 0~.  \\
	\end{aligned}
\end{equation*}
By applying the method of Lagrange multipliers, we obtain:
\begin{equation*}
	\begin{aligned}
		\min_{\lambda\in\Real} \max_{\*q\in\Real^S} \quad &   \*q  \tr  \*z  -  \lambda \cdot (\*q \tr\*1)  =   \*q \tr ( \*z  -  \lambda \cdot \*1)  \\
		\text{s.t. } \quad
		& \norm{\*q}_{\*w} \le \psi~.\\
	\end{aligned}
\end{equation*}
Letting $\*x =\frac{\*q}{\psi}$, we get:
\begin{equation*}
	\begin{aligned}
		\quad \min_{\lambda\in\Real}  \max_{\*x\in\Real^S} &\quad  \psi \cdot \*x  \tr (   \*z  -   \lambda \cdot \*1)  \\
		\text{s.t. } \quad
		& \norm{\*x}_{\*w} \le 1~.\\
	\end{aligned}
\end{equation*}
Given the definition of the \emph{dual norm}, $\norm{\*z}_{\star}=\sup\{\*z^{\intercal }  \*x \ss \norm{\*x} \leq 1\}$, we have:
\begin{equation*}
	\begin{aligned}
		\beta_{\*z}^{s,a}(\*w, \psi)
		&\le 2 \cdot \min_{\lambda \in\Real}  \, \psi\cdot \norm{ \*z - \lambda \cdot \*1}_{\star}  \\
	 	&\le 2 \cdot \psi \cdot \norm{ \*z - \lambda \cdot \*1}_{\star}  ~.\\
	\end{aligned}
\end{equation*}
\end{proof}

\eat{
\begin{proof}[Proof of \Cref{thm:choose_weights}]
	We omit the $s,a$ subscripts to simplify the notation. The inner optimization objective function for RMDPs for $L_p$-constrained  ambiguity sets are defined as follows:
	\[
	q (\*z) = \min_{\*p \in \simplexs} \left\{ \*p\tr \*z : \norm{ \*p - \bar{\*p} } \leq \psi \right\}~.
	\]
	\marek{What is the norm in the proof?? It is an arbitrary norm.} By relaxing the non-negativity constraints on $\*p$, we get the following optimization problem:
	\[
	q(\*z) \ge \min_{\*p \in \real^S} \left\{ {\*p}\tr \*z : \norm{ \*p - \bar{\*p} } \leq \psi,\; \one\tr \*p = 1 \right\}~.
	\]
	Let $\*q = \*p - \bar{\*p}$. We can reformulate the optimization problem using the new variable $\*q$:
	\begin{equation*}
		\begin{aligned}
			\min_\*q \quad & ( \*q + \bar{\*p} )\tr  \*z \\
			\text{s.t. } \quad
			& \norm{\*q} \le \psi\\
			& \*1\tr ( \*q + \bar{\*p}) = 1 \implies  \*1\tr  \*q = 0~.
		\end{aligned}
	\end{equation*}
	Here, $\one$ is a vector of all ones of the appropriate size. Since $\bar{\*p}\tr  \*z $ is a fixed number, we continue with:
	\begin{equation*}
		\begin{aligned}
			\bar{\*p}\tr  \*z  + \min_\*q \quad &  \*q  \tr  \*z \\
			\text{s.t. } \quad
			& \norm{\*q} \le \psi\\
			& \*1\tr  \*q = 0  \\
		\end{aligned}
	\end{equation*}
	We then change the minimization form to maximization:
	\begin{equation*}
		\begin{aligned}
			\bar{\*p}\tr  \*z - \max_\*q \quad & -  \*q  \tr  \*z \\
			\text{s.t. } \quad
			& \norm{\*q} \le \psi\\
			& \*1\tr  \*q = 0  \\
		\end{aligned}
	\end{equation*}
	By applying the method of Lagrange multipliers, we obtain:
	\begin{equation*}
		\begin{aligned}
			\min_{\lambda} \max_\*q \quad &  - \*q  \tr  \*z  -    \lambda(\*q \tr\*1)  =   \*q \tr (- \*z  -  \lambda \*1)  \\
			\text{s.t. } \quad
			& \norm{\*q} \le \psi\\
		\end{aligned}
	\end{equation*}
	Letting  $\*x =\frac{\*q}{\psi}$, we get:
	\begin{equation*}
		\begin{aligned}
			\quad \min_{\lambda}  \max_\*x &~~   \psi \cdot \*x  \tr (  - \*z  -   \lambda \*1) ~. \\
			\text{s.t. } \quad
			& \norm{\*x} \le 1\\
		\end{aligned}
	\end{equation*}
	Given the definition of the \emph{dual norm}, $\norm{\*z}_{\star}=\sup\{\*z^{\intercal }  \*x \ss \norm{\*x} \leq 1\}~,$     we have:
	\begin{equation*}
		\begin{aligned}
			q(\*z) \geq \bar{\*p}\tr  \*z -   \min_\lambda  & ~~  \psi \norm{ \*z + \lambda \*1}_{\star}  ~.\\
		\end{aligned}
	\end{equation*}
	\eat{
		By following Lemma \ref{th:l1_l8} we can derive similar conclusion using weighted norm in which the lower bound is:
		\begin{equation*}
			\begin{aligned}
				\bar{\*p}\tr  \*z -   \min_\lambda  & ~~  \psi \norm{ \*z + \lambda \*1}_{\infty, \frac{1}{\*w}}  \\
			\end{aligned}
		\end{equation*}
		A reasonable estimation for $\lambda$, based on quantile regression, is the median of $\*z$, which is denoted by $\bar{\lambda}$. This estimate is more robust against outliers in the response measurements.
		\[
		\min_\lambda  \norm{ \*z + \lambda \*1}_{\infty, \frac{1}{\*w}} \approx \norm{ \*z - \bar{\lambda} \*1}_{\infty, \frac{1}{\*w}} ~,
		\]
		where $\bar{\lambda} = \Med(\*z)$.
		To achieve the tightest lower bound we choose the weights that maximize the following:
		\begin{equation}
			\begin{aligned} \label{eq:compute_weights}
				\max_\*w  \quad &  \bar{\*p}\tr  \*z - \psi \norm{ \*z - \bar{\lambda} \*1}_{\infty, \frac{1}{\*w}}   \\
				\text{s.t. } \quad
				& \*w \geq 0~\\
				& \*1\tr \*w = 1 ~.
			\end{aligned}
		\end{equation}
		We reduce the degree of freedom by constraining the weights to sum to one.
		\bahram{The term $\psi$ depends on the choice of $\*w$, but we do not take it to the account.}
	}
\end{proof}
}

\begin{proof}[Proof of \cref{th:l1_l8}]
	Assume we are given a set of positive weights $\*w \in \real^n_{++}$ for the following weighted $L_1$ optimization problem:
	\begin{equation}
	\begin{aligned}
	\max_{\*x\in\Real^S} \quad & \*z \tr \*x \\
	\text{s.t. } \quad
	& \norm{\*x}_{1,\*w} \le 1~.\\
	\end{aligned} \label{eq:l1}
	\end{equation}
	We have:
	\begin{align*}
	\*x\tr \*z &= \sum_{i=1}^{n} x_i\cdot z_i \le \sum_{i=1}^{n} |x_i \cdot z_i|\\ &\overset{(a)}{\leq} \sum_{i=1}^{n} |x_i| \cdot |z_i|
	= \sum_{i=1}^{n} w_i\cdot|x_i| \cdot \frac{1}{w_i} \cdot |z_i| \\
	&\le \max_{i=1,\dots,n} \bigg\{ \frac{1}{w_i} \cdot |z_i|\bigg\} \cdot \sum_{i=1}^{n} w_i|x_i|
	= \max_{i=1,\ldots,n}\bigg\{ \frac{1}{w_i} \cdot |z_i|\bigg\} \cdot \lVert \*x \rVert_{1,\*w}\\
	&\overset{(b)}{\leq} \max_{i=1,\ldots,n}\bigg\{ \frac{1}{w_i} |z_i|\bigg\} = \lVert \*z \rVert_{\infty,\frac{1}{\*w}} ~.&&
	\end{align*} \label{eq:l1_dual}
	Here, (a) follows from the Cauchy-Schwarz inequality, and (b) follows from the constraint $\lVert \*x \rVert_{1,\*w} \le 1$ of \eqref{eq:l1}.
\end{proof}

\begin{proof}[Proof of \cref{prop:weights_analytical}]
	We use the notation $1/\*w$ to denote an elementwise inverse of $\*w$ such that $(1/\*w)_i = 1/w_i, i\in\states$. Note that for weighted $L_1$-constrained sets $q = \infty$, and for the $L_\infty$-constrained sets $q = 1$. The value $\bar{\lambda}$ in \eqref{eq:upperbound_negativity} is fixed ahead of time and does not change with $\*w$. Recall that the constraint $\sum_{i=1}^S w_i^2 = 1$ serves to normalize $\*w$ in order to preserve the desired robustness guarantees with \emph{the same} $\psi$. This is because scaling both $\*w$ and $\psi$ simultaneously by an identical factor leaves the ambiguity set unchanged. We adopt the constraint from an approximation of the guarantee by linearization of the upper bound using Jensen’s inequality.
	Next, omitting terms that are constant with respect to $\*w$ simplifies the optimization to:
	\begin{equation} \label{eq:obj_reformulated}
		\*w\opt \in \argmin_{\*w \in \real^S_{++}} \left\{ \norm{ \*z - \bar\lambda \*1}_{q, \frac{1}{\*w}} ~:~ \sum_{i=1}^S w_i^2 = 1 \right\} ~.
	\end{equation}
	For $q = \infty$, the nonlinear optimization problem in \eqref{eq:obj_reformulated} is convex and can be solved \emph{analytically}. Let $b_i = \abs{z_i - \bar{\lambda}}$ for $i=1,\ldots, S$, then \eqref{eq:obj_reformulated} turns to:
	\begin{equation} \label{eq:compute_weight_analytically_l1}
		\min_{t, \*w \in\real^S_{++}}  \left\{ t ~:~ t \ge b_i/w_i,\, \sum_{i=1}^S w_i^2 = 1 \right\}~.
	\end{equation}
	The constraints $\*w > \zeros$ cannot be active since otherwise  $1/w_i$ results in undefined division by zero and can be safely ignored. Then, the convex optimization problem in \cref{eq:compute_weight_analytically_l1} has a linear objective, $S+1$ variables ($\*w$'s and $t$), and $S+1$ constraints. All constraints are active, therefore, in the optimal solution $\*w\opt$~\citep{Bertsekas2003} which must  satisfy:
	\begin{equation} \label{eq:analytical_weights_l1}
		w_i\opt = \nicefrac{b_i}{\sqrt{\sum_{j=1}^S b_j^2} } ~.
	\end{equation}
	Since $\sum_i w_i^2 = 1$ implies $\sum_i b_i^2 / t^2 =1$, we  conclude that $t = \sqrt{\sum_i b_i^2}$. For $q = 1$, the equivalent optimization of \eqref{eq:compute_weight_analytically_l1} becomes:
	\begin{equation} \label{eq:compute_weight_analytically_l8}
		\min_{\*w > \zeros} \; \left\{ \sum_{i=1}^S b_i / w_i ~:~ \sum_{i = 1}^S w_i^2 = 1 \right\}~.
	\end{equation}
	Again, the inequality constraints on weights $\*w > 0$ can be relaxed. Using the necessary optimality conditions (and a Lagrange multiplier), one solution for the optimal weights $\*w$ are:
	\begin{equation} \label{eq:analytical_weights_l8}
		w_i\opt = \nicefrac{b_i^{1/3}}{\sqrt{\sum_{j=1}^S {b_j^{2/3}}} } ~.
	\end{equation}
\end{proof}

\subsection{Proof of Results in \cref{sec:size}} \label{app:sec_size}

\begin{proof}[Proof of \cref{prop:size_correct}]
	The algorithm is an instance of the Sample Average Approximation~(SAA) scheme. The result, therefore, is a direct consequence of Theorem~4.2 in \cite{petrik2019beyond} and
	Theorem~5.3 in \cite{Shapiro2014}.
\end{proof}

\subsection{Proof of Results in \cref{sec:frequentist}} \label{app:sec_freq}

We need several auxiliary results before proving the results.
\begin{theorem}[Weighted $\linf$ error bound (Hoeffding)]
	\label{thm:weighted_linf}
	Suppose that $\bar{\*p}\sa$ is the empirical estimate of the transition probability obtained from $n\sa$ samples for some $s \in \states$ and $ a \in \actions$. Then:
	\begin{equation} \label{eq:w8_error}
	\prob_{\bar{\*p}\sa} \left[ \norm{\bar{\*p}\sa - \*p\opt\sa}\liw \geq \psi\sa \right] \leq 2 \sum_{i=1}^S \exp \left(-2 \frac{\psi\sa^2 n\sa}{w_{i}^2} \right)~.
	\end{equation}
\end{theorem}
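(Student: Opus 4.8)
The plan is to reduce the weighted $\linf$ deviation event to a collection of one-dimensional events, apply a union bound over coordinates, and then control each coordinate with the classical Hoeffding inequality for bounded i.i.d. summands. First I would unfold the definition of the weighted norm: since $\norm{\bar{\*p}\sa - \*p\opt\sa}\liw = \max_{i\in\states}\, w_i \cdot \lvert (\bar{\*p}\sa)_i - (\*p\opt\sa)_i \rvert$, the event $\{\norm{\bar{\*p}\sa - \*p\opt\sa}\liw \ge \psi\sa\}$ coincides with $\bigcup_{i=1}^S \{\, \lvert (\bar{\*p}\sa)_i - (\*p\opt\sa)_i\rvert \ge \psi\sa / w_i \,\}$, because the maximum exceeds $\psi\sa$ exactly when at least one rescaled coordinate does. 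A union bound then gives $\prob[\,\cdot\,] \le \sum_{i=1}^S \prob\bigl[\lvert (\bar{\*p}\sa)_i - (\*p\opt\sa)_i\rvert \ge \psi\sa/w_i\bigr]$, which is where the sum over $i$ in the target inequality originates.

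Next I would set up each coordinate as an average of bounded i.i.d. indicators. Conditioning on the $n\sa$ transition samples drawn from state $s$ under action $a$, define $X_t = \mathbf{1}\{s_t' = i\}$ for the $t$-th such sample; these are i.i.d. Bernoulli variables with mean $(\*p\opt\sa)_i$, take values in $[0,1]$, and the empirical estimate is $(\bar{\*p}\sa)_i = \tfrac{1}{n\sa}\sum_{t=1}^{n\sa} X_t$. The two-sided Hoeffding inequality for means of independent $[0,1]$-valued random variables then yields, for any threshold $t>0$, the bound $\prob[\lvert (\bar{\*p}\sa)_i - (\*p\opt\sa)_i\rvert \ge t] \le 2\exp(-2\,n\sa\, t^2)$. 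Substituting $t = \psi\sa/w_i$ produces $2\exp(-2\,\psi\sa^2\, n\sa / w_i^2)$ for the $i$-th term.

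Finally I would assemble the pieces: combining the union bound with the per-coordinate estimate gives $\prob[\norm{\bar{\*p}\sa - \*p\opt\sa}\liw \ge \psi\sa] \le \sum_{i=1}^S 2\exp(-2\,\psi\sa^2\, n\sa / w_i^2)$, which is exactly \eqref{eq:w8_error} after pulling the factor $2$ outside the sum. There is no deep obstacle here; the only points requiring care are verifying that the Hoeffding hypotheses genuinely hold (the coordinatewise indicators are i.i.d.\ and bounded, so the inequality applies with the sharp constant $2$), keeping the inequality two-sided so that the prefactor $2$ is correct, and confirming that the per-coordinate threshold is $\psi\sa/w_i$ rather than $\psi\sa$—the weighting simply rescales each threshold, which is what places $w_i^2$ in the denominator of the exponent. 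The later $SA$ factor in \eqref{eq:w8_error_delta} is not part of this statement; it arises from a further union bound over $(s,a)$ pairs when deriving \cref{thm:budget_freq_infty} from this lemma.
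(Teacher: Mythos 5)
Your proof is correct and follows essentially the same route as the paper's: a union bound over the $S$ coordinates followed by a (two-sided) Hoeffding bound on each empirical frequency with the rescaled threshold $\psi\sa/w_i$, which is exactly where the $w_i^2$ enters the exponent. The paper merely dresses up the coordinate decomposition via the dual-norm characterization $\max\,\{\*z\tr W(\bar{\*p}\sa-\*p\opt\sa) : \norm{\*z}_1\le 1\}$ with $W=\diag(\*w)$ and splits the two-sided event into two one-sided ones; that is a cosmetic difference, and your direct unfolding of the weighted max-norm is, if anything, the cleaner presentation.
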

\begin{proof}
	First, we will express the weighted $\linf$ distance between two distributions $\bar{\*p}$ and $\*p^\star$ in terms of an optimization problem. Let $\*1_i \in \real^\states$ be the indicator vector for an index $i \in \states$:
	\begin{align*}
	\norm{\bar{\*p}\sa - \*p^\star\sa }\liw  &= \max_{\*z} \left\lbrace \*z \tr W(\bar{\*p}\sa - \*p^\star\sa) : \norm{\*z} _1 \leq 1 \right\rbrace \\
	&= \max_{i \in \states} \Bigl\lbrace \*1_i W(\bar{\*p}\sa - \*p^\star\sa), - \*1_i W(\bar{\*p}\sa - \*p^\star\sa) \Bigr\rbrace ~.
	\end{align*}
	Here, weights are on the diagonal entries of $W$. Using the expression above, we can bound the probability in the lemma as follows:
		\begin{align*}
		\prob  \left[ \norm{\bar{\*p}\sa - \*p^\star\sa}\liw \geq \psi  \right]  &=
		\prob \left[ \max_{i \in \states } \left\lbrace \*1_i W(\bar{\*p}\sa - \*p^\star\sa), - \*1_i W(\bar{\*p}\sa - \*p^\star\sa) \right\rbrace \geq \psi\sa \right] \\
		& \stackrel{(a)}{\leq}  S \max_{i \in \states} \prob \left[ \*1_i W (\bar{\*p}\sa - \*p^\star\sa) \geq \psi\sa \right] +  S \max_{i \in \states} \prob \left[- \*1_i W(\bar{\*p}\sa - \*p^\star\sa) \geq \psi\sa \right] \\
		& \stackrel{(b)}{\leq} 2 \sum_{i=1}^S \exp \left(-2\frac{\psi\sa^2 n}{w_{i}^2} \right)~.
		\end{align*}
	Here, $(a)$ follows from union bound, and $(b)$ follows from Hoeffding's inequality since $\*1 \tr_i \bar{\*p} \in [0,1]$ for any $i \in \states$ and its mean is $\*1_i \tr \*p\opt$.
\end{proof}

Now we describe a proof of error bound in~\eqref{eq:w1_error} on the weighted $L_1$  distance between the estimated transition probabilities $\bar{\*p}$ and the true one $\*p^\star$ over each state $s \in \states = \{ 1, \ldots, S \}$ and action $a \in \actions = \{ 1 , \ldots , A \}$. The proof is an extension to Lemma C.1 ($L_1$ error bound) in~\cite{petrik2019beyond}.

\begin{theorem}[Weighted $L_1$ error bound (Hoeffding)]
	\label{thm:weighted_lone}
	Suppose that $\bar{\*p}\sa$ is the empirical estimate of the transition probability obtained from $n\sa$ samples for some $s \in \states$ and $ a \in \actions$. If the weights $\*w \in \real_{++}^S$ are sorted in a non-increasing order $w_i \ge w_{i+1}$, then:
	\begin{equation} \label{eq:w1_error}
		\prob_{\bar{\*p}\sa} \left[ \norm{\bar{\*p}\sa - \*p\opt\sa}_{1,\*w}  \geq \psi\sa  \right] \leq
		2\sum_{i = 1}^{S-1} 2^{S - i} \exp  \left(  -  \frac{\psi\sa^2n\sa}{2 w_i^2} \right)~.
	\end{equation}
\end{theorem}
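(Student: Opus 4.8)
The plan is to mirror the proof of the weighted $\linf$ bound (\cref{thm:weighted_linf}): rewrite the weighted $L_1$ distance as a supremum of linear functionals, apply a union bound over a finite family of such functionals, and control each one with Hoeffding's inequality. Writing $\*y = \bar{\*p}\sa - \*p\opt\sa$, the duality between weighted $L_1$ and $L_\infty$ norms (\cref{th:l1_l8}) gives $\norm{\*y}_{1,\*w} = \max_{\*c}\{\*c\tr\*y : \lvert c_i\rvert \le w_i\}$, whose maximizer is $c_i = w_i\cdot\operatorname{sign}(y_i)$. The difficulty, compared with the $\linf$ case, is that these functionals couple all coordinates, so the effective range of the per-sample variable $\*c\tr\bar{\*p}\sa = n\sa^{-1}\sum_t c_{J_t}$ (where $J_t$ is the sampled successor state) varies with the sign pattern and can be as large as $2\max_i w_i$. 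Sorting the weights and exploiting the simplex constraint $\one\tr\*y = 0$ is precisely what lets me replace each maximizing functional by one with a controllably small range.

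The key step I would carry out is the following. Since $\one\tr\*y = 0$, the deviation cannot be supported on a single coordinate, so the leading nonzero index $i^\star = \min\{j : y_j \ne 0\}$ satisfies $i^\star \le S-1$. Because the weights are non-increasing, $\norm{\*y}_{1,\*w} = \sum_{j \ge i^\star} w_j \lvert y_j\rvert \le w_{i^\star}\sum_{j\ge i^\star}\lvert y_j\rvert = \*c\tr\*y$, where $\*c$ is supported on $\{i^\star,\dots,S\}$ with entries $c_j = w_{i^\star}\cdot\operatorname{sign}(y_j)$ for $j \ge i^\star$ and $c_j = 0$ for $j < i^\star$. Thus the event $\norm{\*y}_{1,\*w}\ge\psi\sa$ forces $\*c\tr\*y \ge \psi\sa$ for this particular $\*c$, and crucially $\*c$ takes values in $\{0,\pm w_{i^\star}\}$, so the per-sample variable $c_{J_t}$ has range at most $2w_{i^\star}$.

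Finally I would take a union bound over the finite family of all such candidate functionals. For each level $i \in \{1,\dots,S-1\}$ the relevant functionals are indexed by sign vectors on the coordinates $\{i,\dots,S\}$, of which there are $2^{S-i+1} = 2\cdot 2^{S-i}$. Each such functional has range at most $2w_i$, so the one-sided Hoeffding inequality bounds its probability by $\exp\bigl(-2 n\sa \psi\sa^2/(2w_i)^2\bigr) = \exp\bigl(-n\sa\psi\sa^2/(2w_i^2)\bigr)$. Summing over the family yields exactly $2\sum_{i=1}^{S-1}2^{S-i}\exp\bigl(-n\sa\psi\sa^2/(2w_i^2)\bigr)$, which is the claim, and a further per-state-action union bound then gives \cref{thm:budget_freq_l1}. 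The main obstacle is the combinatorial bookkeeping in this last step: verifying that the leading-index decomposition genuinely delivers range $2w_i$ with count $2^{S-i}$, and in particular invoking $\one\tr\*y = 0$ to justify why the sum terminates at $i = S-1$ rather than at $S$.
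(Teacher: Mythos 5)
Your proof is correct and arrives at the paper's bound, but via a genuinely different decomposition at the key step. The paper's proof of \cref{thm:weighted_lone} writes $\norm{\*q}_{1,\*w}$ (via \cref{th:l1_l8}) as a maximum of $\*1_{\qeu_1}\tr W \*q + \*1_{\qeu_2}\tr W(-\*q)$ over subsets, splits the event into two pieces at the halved threshold $\nicefrac{\psi\sa}{2}$, and applies Hoeffding to each one-sided weighted indicator functional, whose per-sample range is $\norm{\*1_{\qeu}\tr W}_\infty = w_{\min \qeu}$; grouping subsets by their minimal index then produces the count $2^{S-i}$ per side. You instead keep the full threshold $\psi\sa$ but flatten the maximizing functional to uniform magnitude $w_{i^\star}$ using the monotonicity of the weights, paying per-sample range $2 w_i$ instead of $w_i$. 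These two trade-offs cancel exactly: the paper gets $\exp\bigl(-2 n\sa (\psi\sa/2)^2 / w_i^2\bigr)$ and you get $\exp\bigl(-2 n\sa \psi\sa^2/(2w_i)^2\bigr)$, both equal to $\exp\bigl(-n\sa\psi\sa^2/(2w_i^2)\bigr)$, and your count of $2^{S-i+1}$ sign patterns per leading index matches the paper's $2 \cdot 2^{S-i}$, so the bounds coincide term by term. One place where your route is strictly cleaner: the paper's final step (c) truncates the sum at $i = S-1$ without comment, even though its union ranges over all subsets and would nominally include the singleton $\{S\}$; your observation that $\one\tr \*y = 0$ forces the leading nonzero index to satisfy $i^\star \le S-1$ supplies exactly the justification the paper leaves implicit. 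Two minor caveats to tidy up in a final write-up: \cref{th:l1_l8} as stated proves only the upper-bound direction of the duality, so you should make explicit (as you implicitly do by exhibiting the maximizer $c_i = w_i \cdot \operatorname{sign}(y_i)$) that equality is attained; and the union bound should be phrased over the fixed finite family of functionals indexed by $(i,\sigma)$ rather than the data-dependent $\*c$ --- your containment argument already handles this correctly, but it is worth stating.
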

\begin{proof}
	Let $\*q\sa = \bar{\*p}\sa - \*p^\star\sa$. To shorten notation in the proof, we omit the $s, a$ indexes when there is no ambiguity. We assume that all weights are non-negative. First, we will express the $L\lw$ norm of $\*q$ in terms of an optimization problem. It is worth noting that $\*1\tr \*q =0$.  Let $\*1_{\qeu_1}, \*1_{\qeu_2} \in \real^\states$ be the indicator vectors for some subsets $\qeu_1,\qeu_2 \subset \states$ where $\qeu_2 = \states \setminus \qeu_1$. According to \cref{th:l1_l8} we have:
	\begin{align*}
	\norm{\*q}_{1,w} &= \max_\*z \left\lbrace  \*z\tr \*q : \norm{\*z}_{\infty,\frac{1}{w}} \leq 1 \right\rbrace    \\
	&= \max_{\qeu_1,\qeu_2 \in 2^\states} \left\lbrace \*1_{\qeu_1} \tr W \*q +  \*1_{\qeu_2}\tr W (-\*q)  :  \qeu_2 = \states \setminus \qeu_1 \right\rbrace ~.
	\end{align*}
	Here weights are on the diagonal entries of $W$. Using the expression above, we can bound the probability as follows:
		\begin{align*}
		\prob \left[ \max_{\qeu_1,\qeu_2 \in 2^\states} \left\lbrace \*1_{\qeu_1}\tr W \*q + \*1_{\qeu_2}\tr W (-\*q) \right\rbrace  \geq \psi \right]
		& \stackrel{(a)}{\leq} \prob \left[ \max_{{\qeu_1} \in 2^\states} \left\lbrace \*1_{\qeu_1}\tr W \*q  \right\rbrace  \geq \frac{\psi}{2} \right] + \prob \left[ \max_{{\qeu_2} \in 2^\states} \left\lbrace \*1_{\qeu_2}\tr W (-\*q)  \right\rbrace  \geq \frac{\psi}{2} \right]  \\
		& \leq \sum_{{\qeu_1} \in 2^{\states}}  \prob \left[ \*1_{\qeu_1} \tr W \*q  \geq \frac{\psi}{2}   \right]  + \sum_{{\qeu_2} \in 2^{\states}}  \prob \left[ \*1_{\qeu_2} \tr W (-\*q)  \geq \frac{\psi}{2}  \right]  \\
		& = \sum_{{\qeu_1} \in 2^{\states}}  \prob \left[ \*1_{\qeu_1} \tr W (\bar{\*p} - \*p^\star)  \geq \frac{\psi}{2}   \right] + \sum_{{\qeu_2} \in 2^{\states}}  \prob \left[ \*1_{\qeu_2} \tr W (- \bar{\*p} + \*p^\star)  \geq \frac{\psi}{2}   \right]   \\
		& \stackrel{(b)}{\leq} \sum_{{\qeu_1} \in 2^{\states}} \exp \left( - \frac{\psi^2 n }{2 \norm{\*1_{\qeu_1} \tr W}_\infty^2}\right) + \sum_{{\qeu_2} \in 2^{\states}} \exp \left( -
		\frac{\psi^2 n }{2 \norm{\*1_{\qeu_2} \tr W}_\infty^2}\right)   \\
		& \stackrel{(c)}{=} 2 \sum_{i = 1}^{S -1} 2^{S - i} \exp  \left( - \frac{\psi^2n}{2 w_i^2} \right)~.
		\end{align*}
	$(a)$ follows from union bound, and $(b)$ follows from Hoeffding's inequality. $(c)$ follows by $\qeu_1 ^c = \qeu _2$ and sorting weights $\*w = \{w_1, \ldots, w_n \}$ in non-increasing order.
\end{proof}


\begin{proof}[Proof of \cref{thm:budget_freq_infty}]
	The result follows from Lemma~A.1 in \cite{petrik2019beyond} and \cref{thm:weighted_linf} by algebraic manipulation.
\end{proof}

\begin{proof}[Proof of \cref{thm:budget_freq_l1}]
	The result follows from Lemma~A.1 in \cite{petrik2019beyond} and \cref{thm:weighted_lone} by algebraic manipulation.
\end{proof}

\subsection{Bernstein Concentration Inequalities} \label{app_bernstein}

\begin{theorem}[Weighted $L_1$ error bound (Bernstein)]
	\label{thm:weighted_lone_bern}
	Suppose that $\bar{\*p}\sa$ is the empirical estimate of the transition probability obtained from $n\sa$ samples for some $s \in \states$ and $ a \in \actions$. If the weights $\*w \in \real_{++}^S$ are sorted in non-increasing order $w_i \ge w_{i+1}$, then the following holds when using Bernstein's inequality:
	\[
	\prob \left[ \norm{\bar{\*p}\sa - \*p^\star\sa}_{1,\*w}  \geq \psi\sa  \right] \leq   2 \sum_{i = 1}^{S-1} 2^{S - i} \exp  \left( - \frac{3\psi^2n}{6 w_i^2 + 4\psi w_i} \right)
	\]
	where $\*w \in \real_{++}^S$ is the vector of weights. The weights are sorted in non-increasing order.
\end{theorem}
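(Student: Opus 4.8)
The plan is to mirror the proof of \cref{thm:weighted_lone} step for step, replacing only the concentration inequality at the end. Writing $\*q\sa = \bar{\*p}\sa - \*p^\star\sa$ and suppressing the $s,a$ indices, I would first invoke \cref{th:l1_l8} to rewrite the weighted norm as a maximum over subsets,
\[
\norm{\*q}_{1,\*w} \;=\; \max_{\qeu_1,\qeu_2}\,\Bigl\{ \*1_{\qeu_1}\tr W \*q + \*1_{\qeu_2}\tr W(-\*q) \ss \qeu_2 = \states\setminus\qeu_1 \Bigr\},
\]
where $W = \diag(\*w)$ and $\*1_{\qeu}$ is the indicator vector of $\qeu\subseteq\states$. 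Splitting the event $\{\norm{\*q}_{1,\*w}\ge\psi\}$ by a union bound into the two halves $\{\*1_{\qeu_1}\tr W\*q \ge \psi/2\}$ and $\{\*1_{\qeu_2}\tr W(-\*q)\ge\psi/2\}$, and then unioning over all subsets, reproduces exactly the combinatorial skeleton of steps $(a)$ and $(c)$ in \cref{thm:weighted_lone}; this part is inherited without change.

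The only new ingredient is the per-subset tail bound. For a fixed $\qeu\subseteq\states$ I would write $\*1_{\qeu}\tr W(\bar{\*p}-\*p^\star) = \tfrac1n \sum_{t=1}^n (Y_t - \E[Y_t])$, where $Y_t$ is the contribution of the $t$-th transition sample, equal to $w_i$ when the sampled next state is $i\in\qeu$ and $0$ otherwise. This variable is supported on $[0,M_\qeu]$ with $M_\qeu = \norm{\*1_{\qeu}\tr W}_\infty = \max_{i\in\qeu} w_i$, so $\abs{Y_t - \E[Y_t]}\le M_\qeu$ and, crucially, Popoviciu's inequality for a bounded variable gives $\operatorname{Var}(Y_t)\le M_\qeu^2/4$. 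Applying Bernstein's inequality with deviation $s = n\psi/2$, variance proxy $n M_\qeu^2/4$, and range $M_\qeu$ then yields
\[
\prob\bigl[\*1_{\qeu}\tr W(\bar{\*p}-\*p^\star)\ge \psi/2\bigr] \;\le\; \exp\!\left(-\frac{(n\psi/2)^2/2}{n M_\qeu^2/4 + M_\qeu (n\psi/2)/3}\right) = \exp\!\left(-\frac{3\psi^2 n}{6 M_\qeu^2 + 4\psi M_\qeu}\right),
\]
where the last equality is routine algebra (cancel the factor $n$ and clear denominators by multiplying through by $24$).

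Summing these bounds over all subsets and collapsing the sum exactly as in step $(c)$ of \cref{thm:weighted_lone} finishes the argument: with the weights sorted non-increasingly, the subsets whose largest weight equals $w_i$ are precisely those whose smallest index is $i$, of which there are $2^{S-i}$, so $M_\qeu = w_i$ on that group and the two subset families together contribute the prefactor $2\sum_{i=1}^{S-1} 2^{S-i}$. Combining this count with the per-subset Bernstein bound gives the stated inequality. I expect the main obstacle to be pinning down the constants in the denominator: obtaining $6w_i^2$ rather than the naive $24 w_i^2$ depends on using the sharp variance bound $\operatorname{Var}(Y_t)\le M_\qeu^2/4$ (Popoviciu) in place of the crude $\E[Y_t^2]\le M_\qeu^2$, while the $4\psi w_i$ term and the overall scaling require careful bookkeeping of the $1/n$ factor and the $1/2$ lost when splitting $\psi$ across the two subset families. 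The combinatorial counting in the final step is already justified in the Hoeffding version and can be reused verbatim.
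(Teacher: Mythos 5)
Your proposal is correct and matches the paper's proof essentially step for step: the paper likewise reuses the subset decomposition, the $\psi/2$ split, and the union bound from the proof of \cref{thm:weighted_lone}, then applies Bernstein's inequality per subset with range $c = \norm{\*1_{\qeu_1}\tr W}_\infty$ and variance $\sigma^2 = \norm{\*1_{\qeu_1}\tr W}_\infty^2/4$ --- exactly your $M_{\qeu}$ and your Popoviciu bound (the paper instead justifies the $1/4$ as the maximal Bernoulli variance, an equivalent argument) --- before collapsing the subset sum with the same $2^{S-i}$ count. Your algebra yielding $\exp\bigl(-3\psi^2 n/(6 w_i^2 + 4\psi w_i)\bigr)$ agrees with the paper's intermediate form $\exp\bigl(-3\psi^2 n/(24\sigma^2 + 4c\psi)\bigr)$, so no gap remains.
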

\begin{proof} The proof is similar to the proof of \cref{thm:weighted_lone} until section $b$. The proof continues from section $(b)$ as follows:
		\begin{align*}
		&\stackrel{(b)}{\leq} \sum_{{\qeu_1} \in 2^{\states}} \exp \left( - \frac{3\psi^2 n}{24\sigma^2+ 4 c \psi}\right) + \sum_{{\qeu_2} \in 2^{\states}} \exp \left( - \frac{3\psi^2 n}{24\sigma^2+ 4 c \psi}\right)   \\
		& \stackrel{(c)}{\leq} \sum_{{\qeu_1} \in 2^{\states}} \exp \left( - \frac{3\psi^2 n}{6\norm{\*1_{\qeu_1} \tr W}_\infty^2+4\psi\norm{\*1_{\qeu_1} \tr W}_\infty}\right) + \sum_{{\qeu_2} \in 2^{\states}} \exp \left( - \frac{3\psi^2 n}{6\norm{\*1_{\qeu_2} \tr W}_\infty^2+4\psi\norm{\*1_{\qeu_2} \tr W}_\infty}\right)   \\
		& \stackrel{(d)}{=} 2 \sum_{i = 1}^{S - 1} 2^{S - i} \exp  \left( - \frac{3\psi^2n}{6 w_i^2 + 4\psi w_i} \right)~.
		\end{align*}
	Here $(b)$ follows from Bernstein's inequality where $\sigma^2$ is the mean of variance of random variables, and $c$ is their upper bound~\citep{devroye2013probabilistic}. In the weighted case, with conservative estimate of variance $\sigma^2 = \norm{\*1_{\qeu_1} \tr W}_\infty^2 / 4$, and $c = \norm{\*1_{\qeu_1} \tr W}_\infty$,  because the random variables are drawn from \emph{Bernoulli} distribution with the maximum possible variance of $1/4$. $(d)$ follows by sorting weights $\*w $ in non-increasing order.
\end{proof}

\section{Detailed Experimental Results} \label{app:empirical}

\subsection{Experimental Setup}

We assess $L_1-$ and $L_\infty$-bounded ambiguity sets, both with weights and without weights. We compare Bayesian credible regions with frequentist Hoeffding- and Bernstein-style sets. We start by assuming a true underlying model that produces simulated datasets containing $20$ samples for each state and action. The frequentist methods construct ambiguity sets directly from the datasets. Bayesian methods combine the data with a prior to compute a posterior distribution and then draw $20$ samples from the posterior distribution to construct a Bayesian ambiguity set.

\subsection{RiverSwim MDP Graph}

\begin{figure}[H]
	\begin{center}
		\begin{tikzpicture}
		[>=stealth',
		scale=2.50,
		shorten > = 1pt,
		node distance = 1.7cm,
		el/.style = {inner sep=1pt, align=left, sloped},
		every label/.append style = {font=\tiny}
		]
		\node (q0) [state,thick,inner sep=1pt,minimum size=0.5pt ]     {$s_0$};
		\node (q1) [state,thick,right=of q0,inner sep=1pt,minimum size=0.5pt]   {$s_1$};
		\node (q2) [state,line width=0pt, draw=white,right=of q1,inner sep=1pt,minimum size=0.5pt]   {$\cdots$};
		\node (q4) [state,thick,right=of q2,inner sep=1pt,minimum size=0.5pt]   {$s_4$};
		\node (q5) [state,thick,right=of q4,inner sep=1pt,minimum size=0.5pt]   {$s_5$};
		\path[->]
		(q0)  edge [in=260,out=280,loop, dashed] node[el,below, font=\tiny] {$(1,r=5)$}   (q0)
		(q0)  edge [in=80,out=100,loop] node[el,above, font=\tiny] {$0.7$}                    (q0)
		(q1)  edge [in=80,out=100,loop] node[el,above, font=\tiny] {$0.6$}                    (q1)
		(q0)  edge [out=60, bend right=-50, in=150]  node[el,above, font=\tiny]  {$0.3$}      (q1)
		(q1)  edge [out=90,bend left=10, in=215]  node[el,below, font=\tiny]  {$0.1$}        (q0)
		(q1)  edge [bend left=50, dashed]  node[el,below, font=\tiny]  {$1$}                  (q0)
		(q2)  edge [in=80,out=100,loop] node[el,above, font=\tiny] {$0.6$}                    (q2)
		(q1)  edge [out=60, bend right=-50, in=150]  node[el,above, font=\tiny]  {$0.3$}     (q2)
		(q2)  edge [out=90,bend left=10, in=215]  node[el,below, font=\tiny]  {$0.1$}        (q1)
		(q2)  edge [bend left=50, dashed]  node[el,below, font=\tiny]  {$1$}                  (q1)
		(q4)  edge [in=80,out=100,loop] node[el,above, font=\tiny] {$0.6$}                    (q4)
		(q2)  edge [out=60, bend right=-50, in=150]  node[el,above, font=\tiny]  {$0.3$}     (q4)
		(q4)  edge [out=90,bend left=10, in=215]  node[el,below, font=\tiny]  {$0.1$}        (q2)
		(q4)  edge [bend left=50, dashed]  node[el,below, font=\tiny]  {$1$}                  (q2)
		(q5)  edge [in=80,out=100,loop] node[el,above, font=\tiny] {$(0.3, r=10000)$}                    (q5)
		(q4)  edge [out=60, bend right=-50, in=150]  node[el,above, font=\tiny]  {$0.3$}     (q5)
		(q5)  edge [out=90,bend left=10, in=215]  node[el,below, font=\tiny]  {$0.7$}        (q4)
		(q5)  edge [bend left=50, dashed]  node[el,below, font=\tiny]  {$1$}                  (q4);
		\end{tikzpicture}
	\end{center}
	\caption{RiverSwim problem with six states and two actions (left-dashed arrow, right-solid arrow). The agent starts in either states $s_1$ or $s_2$.}\label{fig:riverswim}
\end{figure}

\subsection{Full Empirical Results}


\cref{tab:riverswim,tab:machine_replacement,tab:population,tab:inventory,tab:population} report the high-confidence lower bound on the return for the domains that we investigate. The column denotes the confidence $1-\delta$ and the algorithm used to compute the weights $\*w$ for the ambiguity set: ``Unif.w'' corresponds to $\*w = \one$, ``Analyt.w'' corresponds to weights computed by \cref{alg:weight_optimization}, and ``SOCP.w'' corresponds to weights computed by solving \eqref{eq:socp}. The rows indicate which norm was used to define the ambiguity set ($L_1$ or $L_\infty$) and whether Bayesian (B) or frequentist (H) guarantees were used. Note that the SOCP formulation is limited to the $L_1$ ambiguity sets.

\begin{table*}[!htbp]
	\centering
	\begin{tabularx}{0.7\textwidth}{lcccccc}
		\toprule
		\multirow{2}[7]{*}{Method} & \multicolumn{3}{c}{$\delta = 0.5$} & \multicolumn{3}{c}{$\delta = 0.05$} \\
		\cmidrule(l){2-4} \cmidrule(l){5-7}
		& Unif.w & Analyt.w & SOCP.w & Unif.w & Analyt.w & SOCP.w \\
		\midrule
		$L_1 B$ & 33887 & \textbf{51470} & 48620 & 25252 & \textbf{47284} & 43504 \\
		$\linf B$ &33887 &  \textbf{48258} & - & 25252 &\textbf{43247}&- \\
		\midrule
		$L_1$ H& 16354 & \textbf{33116} & 30268 & 12555 & \textbf{29472} & 26398 \\
		$\linf$ H & 20055 & \textbf{40166} & - & 15184 & \textbf{35955} & - \\
		\bottomrule
	\end{tabularx}
	\caption{The return with performance guarantees for the RiverSwim experiment. The return of the nominal MDP is 63080. } \label{tab:riverswim}
\end{table*}

\begin{table*}[!htbp]
	\centering
	\begin{tabularx}{0.7\textwidth}{lcccccc}
		\toprule
		\multirow{2}[7]{*}{Method} & \multicolumn{3}{c}{$\delta = 0.5$} & \multicolumn{3}{c}{$\delta = 0.05$} \\
		\cmidrule(l){2-4} \cmidrule(l){5-7}
		& Unif.w & Analyt.w & SOCP.w & Unif.w & Analyt.w & SOCP.w \\
		\midrule
		$L_1 B$ & -38.1 & \textbf{-22.7} & -26.8 & -42.0 & \textbf{-23.7} & -28.4\\
		$\linf B$ & -38.1 & \textbf{-22.6}  & - & -42.0 & \textbf{-23.5}& - \\
		\midrule
		$L_1$ H& -86.8 & \textbf{-33.2} & -47.9 & -115.0 & \textbf{-34.5} & -53.1 \\
		$\linf$ H & -62.9 & \textbf{-29.5} & - & -74.8 & \textbf{-32.6} & - \\
		\bottomrule
	\end{tabularx}
	\caption{The return with performance guarantees for the Machine Replacement experiment. The return of the nominal MDP is -16.79. } \label{tab:machine_replacement}
\end{table*}

\begin{table*}[!htbp]
	\centering
	\begin{tabularx}{0.7\textwidth}{lcccccc}
		\toprule
		\multirow{2}[7]{*}{Method} & \multicolumn{3}{c}{$\delta = 0.5$} & \multicolumn{3}{c}{$\delta = 0.05$} \\
		\cmidrule(l){2-4} \cmidrule(l){5-7}
		& Unif.w & Analyt.w & SOCP.w & Unif.w & Analyt.w & SOCP.w \\
		\midrule
		$L_1 B$ & -25706 & \textbf{-12151} & -12668 & -25741 & \textbf{-12200} & -12704 \\
		$\linf B$ & -26782 & \textbf{-15468}  & - & -26795 & \textbf{-15623} & - \\
		\midrule
		$L_1$ H& -27499 & \textbf{-27034} & -27409 & -27501 & \textbf{-27047} & -27421 \\
		$\linf$ H & -27465 & \textbf{-27143} & - & -27473 & \textbf{-27184} & - \\
		\bottomrule
	\end{tabularx}
	\caption{The return with performance guarantees for the Population experiment. The return of the nominal MDP is -4127. } \label{tab:population}
\end{table*}

\begin{table*}[!htbp]
	\centering
	\begin{tabularx}{0.7\textwidth}{lcccccc}
		\toprule
		\multirow{2}[7]{*}{Method} & \multicolumn{3}{c}{$\delta = 0.5$} & \multicolumn{3}{c}{$\delta = 0.05$} \\
		\cmidrule(l){2-4} \cmidrule(l){5-7}
		& Unif.w & Analyt.w & SOCP.w & Unif.w & Analyt.w & SOCP.w \\
		\midrule
		$L_1 B$ & 3.75 & \textbf{15.7} & 10.9 & 3.64 & \textbf{15.0} & 10.6 \\
		$\linf B$ & 3.04 & \textbf{20.2}  & - & 2.87 & \textbf{19.8} & - \\
		\midrule
		$L_1$ H& -8.91 & \textbf{1.58} & -6.18  & -8.94 & \textbf{0.89} & -7.74 \\
		$\linf$ H & -8.37 & \textbf{5.83} & - & -8.63 & \textbf{4.90} & - \\
		\bottomrule
	\end{tabularx}
	\caption{The return with performance guarantees for the Inventory Management experiment. The return of the nominal MDP is 163.1. } \label{tab:inventory}
\end{table*}

\begin{table*}[!htb]
	\centering
	\begin{tabularx}{0.7\textwidth}{lcccccc}
		\toprule
		\multirow{2}[7]{*}{Method} & \multicolumn{3}{c}{$\delta = 0.5$} & \multicolumn{3}{c}{$\delta = 0.05$} \\
		\cmidrule(l){2-4} \cmidrule(l){5-7}
		& Unif.w & Analyt.w & SOCP.w & Unif.w & Analyt.w & SOCP.w \\
		\midrule
		$L_1 B$ & 3.83 & \textbf{8.28} & 4.21 & 3.82 & \textbf{8.25} & 4.20 \\
		$\linf B$ & 3.81 & \textbf{7.78}  & - & 3.78 & \textbf{7.71} & - \\
		\midrule
		$L_1$ H& 2.81 & \textbf{3.44} & 2.87  & 2.80 & \textbf{3.42} & 2.85 \\
		$\linf$ H & 3.18 & \textbf{3.94} & - & 3.15 & \textbf{3.92} & - \\
		\bottomrule
	\end{tabularx}
	\caption{The return with performance guarantees for the Cart-Pole experiment. The return of the nominal MDP is 11.11.} \label{tab:cartpole}
\end{table*}

\end{document}